\DeclareMathOperator{\logit}{logit}
\theoremstyle{plain}
\newtheorem{theorem}{Theorem}[section]
\newtheorem{lemma}[theorem]{Lemma}
\newtheorem{corollary}[theorem]{Corollary}
\theoremstyle{definition}
\theoremstyle{remark}
\newcommand{\EE}[0]{\mathbb{E}}
\DeclareMathOperator*{\argmin}{arg\,min}
\icmltitlerunning{Deep LTMLE}
\begin{document}

\twocolumn[
\icmltitle{Longitudinal Targeted Minimum Loss-based Estimation with Temporal-Difference Heterogeneous Transformer}




\begin{icmlauthorlist}
\icmlauthor{Toru Shirakawa}{ou,ucb}
\icmlauthor{Yi Li}{ucb}
\icmlauthor{Yulun Wu}{ucb}
\icmlauthor{Sky Qiu}{ucb}
\icmlauthor{Yuxuan Li}{ucb} \\
\icmlauthor{Mingduo Zhao}{ucb}
\icmlauthor{Hiroyasu Iso}{ou,ncgm}
\icmlauthor{Mark van der Laan}{ucb}
\end{icmlauthorlist}

\icmlaffiliation{ou}{Osaka University Graduate School of Medicine, Suita, Japan}
\icmlaffiliation{ucb}{Univerity of California, Berkeley, United States}
\icmlaffiliation{ncgm}{National Center for Global Health and Medicine, Tokyo, Japan}

\icmlcorrespondingauthor{Toru Shirakawa}{shirakawa@pbhel.med.osaka-u.ac.jp}

\icmlkeywords{Machine Learning, ICML}

\vskip 0.3in
]



\printAffiliationsAndNotice{}  

\begin{abstract}
We propose Deep Longitudinal Targeted Minimum Loss-based Estimation (Deep LTMLE), a novel approach to estimate the mean of counterfactual outcome under dynamic treatment policies in longitudinal problem settings. Our approach utilizes a transformer architecture with heterogeneous type embedding trained using temporal-difference learning. After obtaining an initial estimate using the transformer, following the targeted minimum loss-based likelihood estimation (TMLE) framework, we statistically corrected for the bias commonly associated with machine learning algorithms. Furthermore, our method also facilitates statistical inference by enabling the provision of 95\% confidence intervals grounded in asymptotic statistical theory. Simulation results demonstrate our method's superior performance over existing approaches, particularly in complex, long time-horizon scenarios. It remains effective in small-sample, short-duration contexts, matching the performance of asymptotically efficient estimators. To demonstrate our method in practice, we applied our method to estimate counterfactual mean outcomes for standard versus intensive blood pressure management strategies in a real-world cardiovascular epidemiology cohort study.
\end{abstract}


\section{Introduction}
In the fields of medicine and public health, researchers frequently encounter data that are both high-dimensional and longitudinal. The outcomes of interest in these settings often involve time to the incidence of some failure event, such as total mortality \cite{van2012unified, salerno2023high}. Estimating the counterfactual probability of the event is challenging in high-dimensional longitudinal settings. Existing methods suffer computationally due to lack of scalability and have worse performance due to curse-of-dimensionality \cite{wyss_machine_learning_2022}. In response, we propose an estimator that is computationally scalable and simultaneously allows for robust statistical inference. Our estimator incorporates a transformer architecture for estimating the target estimand, defined as the cumulative incidence probability under dynamic interventions, where the treatment sequence depends on patients' evolving histories. The target estimand can be identified through the g-formula contingent upon suitable assumptions \cite{robins1986new}. However, the target functional involves integration over potentially high-dimensional time-dependent covariates across time-horizon, posing computational challenges. Our method advances the longitudinal targeted minimum loss-based estimation (LTMLE) framework \cite{laanTargetedMinimumLoss2012, ltmle_rpackage} by leveraging the computational capabilities of the transformer, facilitating the estimation of the target estimand and relevant nuisance parameters.


A number of estimators for the target estimand were proposed since the pioneering work by Robins \cite{robins1986new}. These estimators first factor the target parameter as a functional of nuisance parameters given a structural assumption on the underlying variables. Then, a common strategy to construct an estimator is plug-in, where one estimate the nuisance components with some models and then plug them into the target functional. However, since the naive plug-in of the estimated nuisance components causes bias, several methods have been proposed to remove this bias using the first variation of the target functional called influence function. Examples of such debiasing techniques include one-step estimators \cite{klaassen1987,bkrw}, estimating equations \cite{robins1994, cherLocally}, and targeted minimum loss-based estimation (TMLE) \cite{laanTLBook2011}. Notably, due to its plug-in property, TMLE stands out because it will respect any conditional bounds on the outcome or global bounds on the statistical model, resulting in improved finite-sample performance \cite{tmle_plug_in}. 

The first-order bias of the plug-in estimator is represented as a population mean of the influence function evaluated at the estimated nuisance distribution. Bias correction is performed by solving the empirical analogue of this term. TMLE solves this term by optimizing a loss function along a submodel starting from the initial nuisance estimate \cite{bangDoublyRobustEstimation2005, laanTMLE2006, laanTLBook2011}. The loss function and the submodel are chosen so that the linear span of the derivative of the loss function along the submodel contains the efficient influence function, the influence function with minimal variance. Targeting is the term that refers to this correction by fluctuating of the initial estimate along the path.

The current LTMLE, a TMLE developed in the context of longitudinal data, relies on a sequential regression representation of the target estimand \cite{bangDoublyRobustEstimation2005}. An ensemble machine learning technique called \textit{super learner} is then used to estimate the nuisance components of the data-generating distribution \cite{superlearner}. In real-world complex longitudinal data, these nuisance components, such as the survival probability at a given time, may depend on all past histories. Therefore, the Markovian property, which states that future variable values only depends on the present variables, independent of the past, is not guaranteed to hold. In other words, every observed variable could depend on the past variables in the time ordering. Hence, we want our model for the nuisance components to be able to take variable length of history as input. Furthermore, LTMLE performs sequential regressions starting from the last time point and thus cannot pool data across time points. Therefore, the error from the first regression at the last time point, which is the hardest regression because of the least observations due to failure and censoring with the most high-demensional covariates with all the history through the observation period, propagates to the final estimate. Under the targeted learning framework, we introduce a transformer architecture tailored towards our longitudinal setting capable of pooling time points, and propose a novel method for the bias correction using a single fluctuation parameter across all time-points.

Our contribution includes:
1) Developed a general method that uses a transformer architecture to facilitate valid statistical inference in longitudinal settings concerning survival outcomes under dynamic interventions;
2) Proposed a method for bias correction using one-dimensional fluctuation for any length of time-horizon;
3) Demonstrated competetive statistical performance with asymptotically efficient estimators in simple and low-dimensional settings and superior statistical and computational performances in more complex settings;
and 4) Applied our method to a real-world medical data with results presented in a format that aligns with clinical research recommendations \citep{hernan_target_2022, dahabreh_causal_2024}.

\section{Related Work}

In the data science literature, several methods were proposed that predict the counterfactual outcomes from patient history. The methods include G-Net \cite{liGNetDeepLearning2020a}, counterfactual recurrent network (CRN) \cite{Bica2020Estimating}, and causal transformer (CT) \cite{pmlr-v162-melnychuk22a}. However, their target parameters do not involve survival outcomes, and their methods are optimized for the mean squared error (MSE) of the individual predictions, rather than for making statistical inferences. DeepACE \cite{Frauen_Hatt_Melnychuk_Feuerriegel_2023} is closely related to the present study which uses deep neural networks to estimate the whole propensity scores and outcome regressions simultaneously. Furthermore, it has an additional layer for targeting implementing the one-dimensional submodel proposed by van der Laan \cite{van2018targeted}. Our method differs from theirs in the following three aspects. First, DeepACE incorporates the targeting mechanism within their loss function, which requires an additional hyperparameter. However, there is a lack of justification for the chosen value of this hyperparameter and guidance on its tuning in practical applications. Our approach, in contrast, separates the targeting step, aligning more closely with the TMLE literature. Second, DeepACE does not address survival outcomes, specifically failing to consider the process degeneracy following a patient's event occurrence. Third, while DeepACE utilizes the long short-term memory (LSTM) architecture, our method employs transformers. Transformers are superior in capturing long-term dependencies and offer greater computational efficiency during training than LSTM. Moreover, DeepACE does not provide uncertainty measures, such as confidence intervals, limiting its utility for statistical inference.

Our problem of estimating mean of counterfactual outcomes from longitudinal observational data under dynamic interventions has been extensively investigated as an off-policy evaluation problem in the bandit algorithm and reinforcement learning literature \cite{levineORL}. Methods of bias correction after plugging in the initial estimate with influence function were also introduced in this context \cite{jiangDROPE,farajtabarMRDROPE,naritaDOPE}. However, they did not provide tools for inference. Double reinforcement learning \cite{ueharaDRL} utilized the efficient influence functions in the spirit of double machine learning \cite{chernozhukovDML}, which is a closed form of a more general debiased estimating equation framework \cite{cherLocally}, to correct plug-in bias and proved efficiency. TMLE deform the distribution itself to correct bias before plugged-in to the the target functional, thereby the values are contained the domain of the functional. Since the estimator based on estimating equation was reported to have inferrior performance to LTMLE even in short time-horizons \cite{tran_double_2019}, we focus on plug-in estimators in the present study.

\section{Problem Formulation}
In this section, follwing the roadmap of causal inference \cite{petersenCausalModelsLearning2014,van2018targeted,dangCausalRoadmapGenerating2023}, we first describe the causal structure that generated the observed data and the statistical model that contains the data-generating distribution. Next, we define our causal target parameter. Then, we discuss assumptions needed to identify our target parameter from the observed data. Finally, we describe the idea of statistical method for constructing estimator and correcting bias.

\subsection{Data}
We consider the general longitudinal setting involving repeated measurements of a set of variables for a group of $n$ patients over a period of time. In particular, our observed data contains $n$ independent and identically distributed copies of random vector 
\begin{align}
    O=(W_0 = W, L_1, A_1, Y_1, \ldots, L_T, A_T, Y_T = Y)
\end{align}
with baseline covariates $W$, time-dependent covariates $L_t$, treatments $A_t$, and outcome $Y_t$. We use $P_0$ to denote the true probability distribution of $O$ that generated the data, and $P_0$ is in some statistical model $\mathcal{M}$. Stopping time $T$ is a random variable (e.g. time of death in the case of survival analysis) and we use $\tau$ to denote the maximum time. We make the remark that in real-world data, patients are often subject to censoring. For a formulation of the data structure involving censoring nodes, see Appendix \ref{sec:extended-model}.

\subsection{Target Parameter}

To define the target parameter, we introduce a structural causal model (SCM). In brief, SCM assumes each observed random variable $X$ is generated from the parent nodes $pa(X)$ and the external noise $U_X$ by a production function $f_X$ as $X = f_X(pa(X), U_X)$. By abusing notation, we also denote the induced probability measure of $X$ by the same symbol $f_X$. See Appendix \ref{sec:scm} for details.

Our target parameter is the counterfactual mean of the final outcome $Y$ under a user-specified dynamic treatment policy $g=[g_t]_{t=1}^\tau$ where $g_t$ is a probability measure on the treatment space conditioned on the whole history, $pa(A_t) = (L_{1:t},A_{1:t-1},Y_{1:t-1})$ up until $A_t$ (not including $A_t$). Specifically, our target parameter is given by
\begin{equation}
    \psi(P) = \EE Y^g,
\end{equation}
which is the mean of the counterfactual outcome produced by replacing $\pi$, defined as the behavior treatment policy observed on the data, with $g$ in the structural causal model. 

\paragraph{Identification} Under the positivity assumption: 
\begin{equation}
    g \ll \pi,    
\end{equation}
and the sequential randomization assumption:
\begin{equation}
    Y^g \perp A_t \mid pa(A_t) \text{ for } t=1,\ldots,\tau,    
\end{equation}
we can identify our target causal parameter through g-formula as the mean of $Y$ under the counterfactual distribution which is given by replacing distributions $\pi$ with $g$ \cite{robins1986new}:
\begin{equation}
    \EE Y^g = \EE_gY.
\end{equation}
Note that the consistency assumption $Y=Y^{\pi}$, usually stated in causal inference literature, is a consequence of the definition of counterfactual outcome in our SCM. Now the problem is reduced to the estimation of the statistical parameter:
\begin{equation}
    \label{eq:g-functional}
    \psi(P) = \EE_gY.
\end{equation}

\subsection{Targeted Minimum Loss-based Estimation}

\label{sec:plug-in}

Given we have an estimator $\hat{P}_n$ of the data-generating distribution $P_0$, a natural estimator of the target functional is the plug-in estimator $\hat{\psi}_n=\psi(\hat{P}_n)$. Under a regularity condition, $\psi$ admits the following first-order expansion
\begin{equation}
    \psi(\hat{P}_n)-\psi(P_0)=-\int_{\mathcal{O}}D^\star(\hat{P}_n)dP_0 + R_2(\hat{P}_n,P_0),
\end{equation}
where $D^\star$ is the efficient influence function of $\psi$, and $R_2(\hat{P}_n,P_0)$ is the exact remainder. Influence functions quantifies the amount of changes of an estimator under small perturbations of the sample. The efficient influence function is the influence function with minimal variance. The idea of TMLE is to eliminate the empirical analogue of the first term of the right hand side by fluctuating $\hat{P}_n$ to find a distribution $\hat{P}^\star_n$ with $P_nD^\star(\hat{P}^\star_n)=0$, where $Pf=\int fdP$ is a shorthand for the expectation of a measurable function $f$ with respect to a probability measure $P$. Our problem is to obtain an initial estimate $\hat{P}_n$ with a potentially large scale and high dimensional longitudinal data, and correct bias of the plug-in estimator $\psi(\hat{P}_n)$ by fluctuating $\hat{P}_n$. 

\begin{algorithm*}[tb]
   \caption{Temporal Difference Learning of Conditional Counterfactual Mean Outcomes}
   \label{alg:td-learning}
\begin{algorithmic}[1]
   \FOR{$b=1$ {\bfseries to} $B$}
       \STATE $\hat{Q}_T(pa(Y_T)) \leftarrow \hat{Q}_T(pa(Y_T)) - \alpha \cdot \partial_{\, \hat{Q}_T(pa(Y_T))}{\mathcal{L}(\hat{Q}_T(pa(Y_T)), \; Y_T )}$
       \STATE $\hat{\pi}_T\leftarrow \hat{\pi}_T- \alpha \cdot \partial_{\, \hat{\pi}_T(pa(A_T))}{\mathcal{L}(\hat{\pi}_T(pa(A_T)), A_T)}$
       
       \FOR{$t=T-1$ {\bfseries to} $1$}
       \STATE $\hat{V}_{t+1}(pa(A_{t+1})) \leftarrow \mathbb E_{A_{t+1} \sim g_{t+1}(pa(A_{t+1}))} \big[\hat{Q}_{t+1}(pa(A_{t+1}), A_{t+1})\big]$
       \STATE $\hat{Q}_{t}(pa(Y_t)) \leftarrow \hat{Q}_{t}(pa(Y_t)) - \alpha \cdot \partial_{\hat{Q}_{t}(pa(Y_t))} \mathcal{L}(\hat{Q}_{t}(pa(Y_t)), \; \hat{V}_{t+1}(pa(A_{t+1})) )$
       \STATE  $\hat{\pi}_t\leftarrow \hat{\pi}_t- \alpha \cdot \partial_{\, \hat{\pi}_t(pa(A_t))}{\mathcal{L}(\hat{\pi}_t(pa(A_t)), A_{t})}$
       \ENDFOR
   \ENDFOR
   \STATE Output $(\hat{Q}_t, \hat{V}_t, \hat{\pi}_t)_{t=1}^T$
\end{algorithmic}
\end{algorithm*}

\section{Proposed Method}

\label{sec:Deep LTMLE}

In this section, we describe our proposed method, Deep Longitudinal Targeted Minimum Loss-based Estimation (Deep LTMLE). Let 
\begin{align}
    Q_t(pa(Y_t)) = \mathbb E_g[Y_\tau \mid L_{1:t}, A_{1:t},Y_{1:t-1}]
\end{align}
be the mean outcome at the end of the observation period $\tau$ given the history before node $Y_t$ for $t=1,\ldots,\tau$, where future treatments $A_{t+1}, \ldots, A_\tau$ follow treatment assignment policy $g$. Similarly, 
\begin{align}
    V_t(pa(A_t)) = \mathbb E_g[Y_\tau \mid L_{1:t},A_{1:t-1},Y_{1:t-1}]
\end{align}
is the mean outcome at the time $\tau$ given the history before node $A_t$, for $t=1,\ldots,\tau$. We abbreviate $Q_t$ for $Q_t(pa(Y_t))$ if it is clear from the context, similarly for $V_t$. Our goal is to estimate $\psi(P)$ by
\begin{align}
    \label{final-step-plug-in}
    \hat{\psi}^\star_{n}&=P_n\hat{V}_{1,\varepsilon^\star}(pa(A_1)) \nonumber\\
    &= P_n \mathbb E_{A_1 \sim g_1(pa(A_1))} [\hat{Q}_{1,\varepsilon^\star}(pa(A_1), A_1)]
\end{align}
where $\hat{Q}_{1,\varepsilon^\star}$ is an estimation of $Q_1$ such that $\hat{\psi}^\star_{n}$ is asymptotically efficient. We achieve this by proposing a temporal-difference heterogeneous transformer to yield an initial estimation $\hat{Q}_1$, then update this estimation to get $\hat{Q}_{1,\varepsilon^\star}$ via Targeted Minimum Loss-based Estimation (TMLE). We denote the stopping time by $T$ when we observe failure or censoring, or the trajectory reached the final time $\tau$.

\begin{figure*}[ht]
    \centering
    \includegraphics[width=\textwidth]{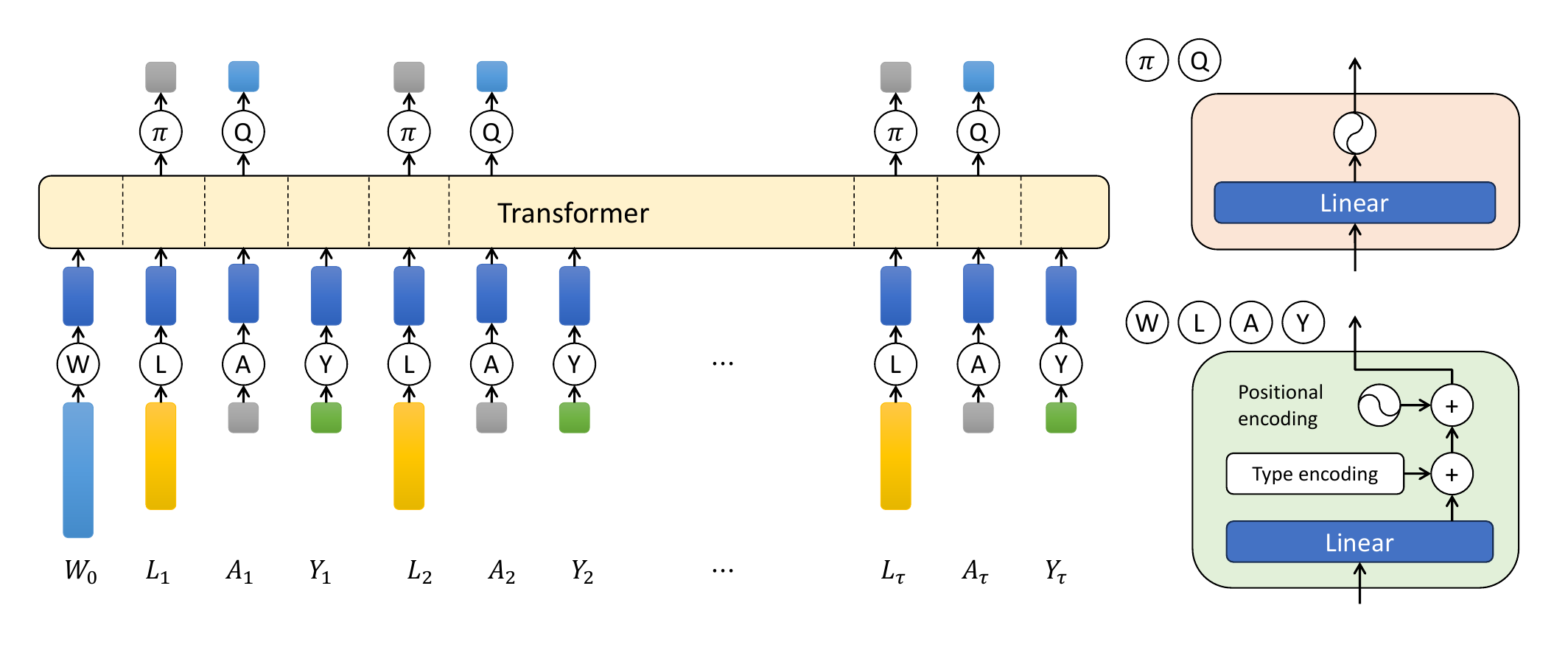}
    \caption{{\bf Architecture of temporal-difference heterogeneous-token transformer (TDHT)}. Observed variables are fed into transformer after embedding layers depending on the variable types. Embedding layers aggregate linear transform with learnable type encoding and learnable positional encoding. Outputs of the transformer are $\pi$ after $L$ and $Q$ after $A$. Each output head consists of a linear layer and the final activation function depending on variable distribution (sigmoid for binary, softmax for categorical and none for continuous). The outputs of $\pi$ heads are used to learn propensity scores and the outputs of $Q$ heads are used for temporal-difference learning after integration with respect to the counterfactual treatment policy.}
    \label{fig:architecture}
\end{figure*}

\subsection{Temporal-Difference Heterogeneous Transformer}

To learn the initial model $\hat{V}_1$, we use temporal-difference loss as the objective to learn underlying models $\hat{Q}_t$ for $t=1, \ldots, \tau$ via stochastic gradient descent (SGD). The principle of temporal difference learning \citep{sutton1988learning, mnih2013playing} is to supervise $\hat{Q}_t$ to obey the temporal equality of $Q_t$:
\begin{align}
    Q_t &= \mathbb E_{p(Y_t, L_{t+1} \mid pa(Y_t))} V_{t+1} \nonumber\\
    &= \mathbb E_{p(Y_t, L_{t+1} \mid pa(Y_t)), \, g_{t+1}(pa(A_{t+1}))} Q_{t+1} 
\end{align}

for $t=1, \ldots, \tau-1$ and $Q_\tau = \mathbb E_{p(Y_\tau \mid pa(Y_\tau))} Y_\tau$. The temporal difference loss on a sample trajectory is thus given by 
\begin{align}
    \mathcal{L}^Q_t = \mathcal{L}(\hat{Q}_t, \hat{V}_{t+1}) = \mathcal{L}(\hat{Q}_t, \mathbb E_{g_{t+1}(pa(A_{t+1}))} \big[\hat{Q}_{t+1}\big])
\end{align}
for $t=1, \ldots, T-1$, where $\mathbb E_{g_{t+1}(pa(A_{t+1}))} \big[\hat{Q}_{t+1}\big]$ can be computed by Monte-Carlo estimation if $g$ is a density function, and $\mathcal{L}^Q_T = \mathcal{L}(\hat{Q}_T, Y_T)$. In the case of survival analysis, the components for $t=1,\ldots,\tau-1$ are defined as
$\mathcal{L}^Q_t = \mathcal{L}_{\mathrm{bce}}(\hat{Q}_t, \hat{V}_{t+1})$, where $\mathcal{L}_{\mathrm{bce}}(\hat{y},y)=-\big[y\log \hat{y}+(1-y)\log(1-\hat{y})\big]$  is the binary cross entropy loss. To yield the updated model $\hat{Q}_{t,\varepsilon}$, we need to adjust $\hat{Q}_t$ post training to factor in estimations of propensity scores $e_t(a) = \pi_t(a | pa(A_t))$, which we will describe in detail in the next section. Hence, the loss function also includes 
\begin{align}
    \mathcal{L}^e_t = \mathcal{L}_{\mathrm{bce}}(\hat{\pi}_t, A_t)
\end{align}
for estimating model $\hat{\pi}_t$ and is thus given by
\begin{align}
    \mathcal{L} 
    = \sum_{t=1}^{\tau} \mathbbm{1}\{t \leq T\} \mathcal{L}_t 
    = \sum_{t=1}^{T} \mathcal{L}^{Q}_t + \alpha \mathcal{L}^e_t
\end{align} 
where $\alpha$ is a scaling hyperparameter. See Algorithm \ref{alg:td-learning} for the optimization workflow. Convergence of the algorithm can be found in Appendix \ref{sec:convergence of td}.

For the estimation of $Q_t$ and $\pi_t$, we propose a unified model architecture to simultaneously optimize $\hat{Q}_t$ and $\hat{\pi}_t$ in an efficient, non-sequential manner by adapting a decoder-only Transformer \cite{vaswani2017a, brown2020language} to longitudinal data with heterogeneous tokens. An overview of the model architecture is given in Figure \ref{fig:architecture}. For each sampled sequence in the training set, we feed each token in the sequence to an embedding layer $h_\bullet$, which can be type-specific linear layers as in $h_W$, $h_L$, $h_A$, $h_Y$, or simply a shared layer $h_\bullet = h$ padding each token to the same length for a more efficient and parallelizable embedding operation. Then, each embedding is aggregated with its positional encoding $H_0, \ldots, H_\tau$ and type encoding $H_W$, $H_L$, $H_A$, and $H_Y$ that represent its timestamp and variable type information:
\begin{align}
    H(\bullet_t) = \texttt{concat}\left\{ h_\bullet(\bullet_t), \, H_\bullet, \, H_t \right\}
\end{align}
for $\bullet_t \in (W_0, L_1, A_1, Y_1, \ldots, L_\tau, A_\tau, Y_\tau)$. Note that we include type embedding $H_\bullet$ because $h_\bullet$ need not necessarily be type-specific. Then, the embedded sequence is fed into the transformer and produce $\hat{\pi}$ and $\hat{Q}$ through output layers $f_e$ and $f_Q$ at each position that corresponds to token type $L$ and $A$ respectively:
\begin{align}
    \hat{\pi}_t &= f_e(\mathrm{transformer}\left\{H(W_0), \, \ldots, \, H(L_t)\right\}) \\
    \hat{Q}_t &= f_Q(\mathrm{transformer}\left\{H(W_0), \, \ldots, \, H(A_t)\right\}).
\end{align}

In practice, we can use a joint output layer $f$ for $f_e$ and $f_Q$ for more efficient and parallelizable output generation, where the output number of dimensions is the sum of the number of dimensions $\mathrm{dim}(A)$ for treatment $A$ and $\mathrm{dim}(Y)$ for outcome $Y$. Then, we compute softmax probabilities masking out the last $\mathrm{dim}(Y)$ dimensions for $\hat{\pi}_t$ and first $\mathrm{dim}(A)$ dimensions for $\hat{Q}_t$.

Our proposed architecture does not entail concatenation of variables at the same timestamp or sequential decoding of outputs following the transformer blocks like prior work \citet{pmlr-v162-melnychuk22a}, which 1) allows us to handle different types of and different number of variables at different timestamps (e.g. starting from $W_0$, ending at $L_8$, while $A_3$ and $Y_3$ are missing), and 2) is fully parallelizable when we use padding instead of learnable linear mapping for the embedding layer $h_\bullet$ and use the joint output layer $f$.

\subsection{Targeted Minimum Loss-based Estimation}



\paragraph{Efficient Influence Function} Since our target parameter is the counterfactual mean outcome at the final $\tau$, the relevant part of $P_0$ of interest are $Q_{t,0}$ for $t=1,2,...,\tau$. 

\begin{theorem}
In our counterfactual mean case, the efficient influence function $D^\star(P)(O)=D^\star(\{Q_{t},\pi_t\}_{t=1}^\tau)(O)$ is given by
\begin{align}
      D^\star(P)(O) = (V_1 - \psi_0)+\sum_{t=1}^T   I_{t}(V_{t+1} - Q_t)
\end{align}
where $I_t=\prod_{s=1}^{t}dg_s/d\pi_s$ and $V_{T+1} = Y_T$.    
\end{theorem}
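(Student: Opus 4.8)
The plan is to compute the canonical gradient of $\psi$ directly. Since the model $\mathcal{M}$ is nonparametric, the tangent space at $P$ is all of $L^2_0(P)$, so $\psi$ has a unique gradient, which is its efficient influence function; it therefore suffices to exhibit a function $D^\star$ with $PD^\star = 0$ such that $\frac{d}{d\varepsilon}\psi(P_\varepsilon)\big|_{\varepsilon=0} = \EE_P[D^\star(O)\,s(O)]$ for every regular one-dimensional submodel $\{P_\varepsilon\}$ through $P$ with score $s$. I would first record the iterated-conditional-expectation representation already encoded in the temporal-difference identities above: $\psi_0 = P V_1$, $V_t(pa(A_t)) = \EE_{g_t}[Q_t \mid pa(A_t)]$, and $Q_t(pa(Y_t)) = \EE_P[V_{t+1}\mid pa(Y_t)]$ for $t \le T$, with $V_{T+1} = Y_T$. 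This exhibits $\psi$ as the g-computation functional $\psi(P) = \int Y_\tau\, dP_g$, where $P_g$ differs from $P$ only in replacing each treatment factor $\pi_t$ by the fixed, user-specified $g_t$.

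The next step is to differentiate this representation along the time-ordered factorization of the likelihood. Writing the score as $s = s_0 + \sum_t (s_{L_t} + s_{A_t} + s_{Y_t})$, where $s_0$ collects the scores of the baseline nodes preceding $A_1$, and using that $P_g$ does not carry the treatment factors, one gets $\frac{d}{d\varepsilon}\psi(P_\varepsilon)\big|_0 = \int Y_\tau\big(s_0 + \sum_t (s_{L_t}+s_{Y_t})\big)\, dP_g$; under the change of measure $dP_g = I_\tau\, dP$ with $I_\tau = \prod_{s=1}^{\tau} dg_s/d\pi_s$ this becomes $\EE_P\big[Y_\tau I_\tau\big(s_0 + \sum_t (s_{L_t}+s_{Y_t})\big)\big]$. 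The one genuinely computational ingredient is a sequential inverse-probability-weighting identity for $Z := Y_\tau I_\tau$: writing $\tilde Q_t := \EE_P[V_{t+1}\mid pa(Y_t), Y_t]$, repeated conditioning together with the definitions of $Q_t$ and $V_t$ gives $\EE_P[Z \mid pa(Y_t)] = I_t Q_t$, $\EE_P[Z \mid pa(Y_t), Y_t] = I_t \tilde Q_t$, $\EE_P[Z \mid pa(A_{t+1})] = I_t V_{t+1}$, and hence $\EE_P[Z \mid pa(Y_{t+1})] = I_{t+1} Q_{t+1}$.

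Applying these identities node by node, the contribution of $s_{Y_t}$ to the derivative collapses (subtracting the appropriate parent-measurable centering and using $\EE_P[s_{Y_t}\mid pa(Y_t)] = 0$) to $\EE_P[I_t(\tilde Q_t - Q_t)\,s_{Y_t}]$, that of $s_{L_{t+1}}$ to $\EE_P[I_t(V_{t+1}-\tilde Q_t)\,s_{L_{t+1}}]$, and that of $s_0$ to $\EE_P[(V_1-\psi_0)\,s_0]$. On the other side, pairing the claimed $D^\star = (V_1-\psi_0) + \sum_t I_t(V_{t+1}-Q_t)$ with each score and using the mean-zero residual relations $\EE_P[V_{t+1}-Q_t\mid pa(Y_t)] = 0$ and $\EE_P[V_{t+1}-\tilde Q_t\mid pa(Y_t), Y_t] = 0$ shows that exactly one summand of $D^\star$ survives for each non-treatment score and equals precisely the expression just obtained, while every summand is orthogonal to every treatment score $s_{A_t}$ --- consistent with $\psi$ not depending on $\pi$. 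Since $PD^\star = 0$ is immediate (each summand has conditional mean zero given its parent set, and $P(V_1-\psi_0)=0$ by the representation), this identifies $D^\star$ as the canonical gradient, hence the efficient influence function; the formula is the survival/stopping-time specialization of the classical longitudinal g-computation influence function.

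The main obstacle I anticipate is organizational rather than conceptual: keeping track of parent sets through the three-node-per-period ordering $(L_t, A_t, Y_t)$, propagating the cumulative weight $I_t$ correctly through the change of measure in the sequential IPW identity, and handling process degeneracy after the stopping time $T$ so that the sums genuinely truncate at $T$ and the boundary term reads $V_{T+1} = Y_T$. Everything beyond the sequential IPW identity is iterated-expectation bookkeeping and the orthogonality of mean-zero residuals attached to distinct nodes of the time ordering.
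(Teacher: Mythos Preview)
Your derivation is correct and complete: the pathwise-derivative computation via the time-ordered score decomposition, the change-of-measure identity $dP_g = I_\tau\,dP$, the sequential IPW identities $\EE_P[Y_\tau I_\tau \mid pa(Y_t)] = I_t Q_t$ and $\EE_P[Y_\tau I_\tau \mid pa(A_{t+1})] = I_t V_{t+1}$, and the node-by-node matching against each score component are exactly the standard route to this gradient. The introduction of the auxiliary $\tilde Q_t = \EE_P[V_{t+1}\mid pa(Y_t),Y_t]$ to separate the $Y_t$ and $L_{t+1}$ contributions is a clean way to handle the three-node-per-period ordering, and your remark about degeneracy after $T$ correctly accounts for the truncation of the sum at the stopping time.

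There is, however, nothing to compare against: the paper does not prove this theorem at all. Both in the main text and in the appendix (equation~\eqref{eq:eif}) it simply states the formula and attributes it to \cite{laanTargetedMinimumLoss2012}. So your proposal supplies a full argument where the paper defers to the literature. The approach you take is the canonical one in that literature---the same score-decomposition calculation underlies the derivation in van der Laan and Gruber (2012)---so in that sense your route coincides with what the cited reference does, even though the present paper itself contains no proof.
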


This is given in \cite{laanTargetedMinimumLoss2012}.

\subsubsection{Temporal Difference Targeting}
\paragraph{Submodel}
We update the initial estimate $\hat{Q}_t$ for $Q_{t,0}$ to $\hat{Q}^{g\star}_t$ such that $P_n D^\star(\{\hat{Q}^{g\star}_{t},\hat{\pi}_t\}_{t=1}^\tau)=0$. We realize this by fluctuating $\hat{Q}_t$ along a one-dimensional submodel through the initial fit $\hat{Q}=[\hat{Q}_t]_{t=1}^\tau$ given by, $\hat{Q}_\varepsilon=[\hat{Q}_{t,\varepsilon}]_{t=1}^\tau$, where
\begin{align}
    \logit \hat{Q}_{t,\varepsilon} = \logit \hat{Q}_t + \varepsilon
\end{align}
with a common fluctuation parameter $\varepsilon$ across $t$. If the outcome is survival, then we automatically have $Y_t\in[0,1]$. In a general longitudinal setting for bounded $Y_t$'s, we can re-scale both $Y_t$ and $\hat{Q}_t$ to $[0,1]$ and use the same one-dimensional submodel.

\paragraph{Partial Loss function}
We search for the optimal fluctuation $\varepsilon^\star$ with respect to the partial loss function 
\begin{align}
    \mathcal{L}^\star(\hat{Q}_\varepsilon, \hat{V}_{\varepsilon'};\hat{\pi})=\sum_{t=1}^T I_{t}(\hat{\pi})\mathcal{L}_{\mathrm{bce}}(\hat{Q}_{t,\varepsilon}, \hat{V}_{t+1,\varepsilon'}),
\end{align}
where $\hat{V}_{T+1,\varepsilon}=Y_T$ and $\hat{V}_{t,\varepsilon}= \mathbb E_{A_{t} \sim g_{t}} \big[\hat{Q}_{t,\varepsilon}\big]$, such that $\mathcal{L}^\star(\hat{Q}_\varepsilon, \hat{V}_{\varepsilon'})$ satisfies the following theorem:

\begin{theorem}
    \label{thm:plf}
    For any $\varepsilon^\star$, we have
    \begin{align}
        \partial_{\varepsilon}|_{\varepsilon^\star}\mathcal{L}^\star(\hat{Q}_\varepsilon, \hat{V}_{\varepsilon^\star}) = D^\star(\hat{Q}_{\varepsilon^\star}, \hat{\pi}).
    \end{align}
\end{theorem}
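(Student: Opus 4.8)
The plan is to prove the identity by direct application of the chain rule to the one-dimensional logistic submodel, using the crucial structural fact that in $\mathcal{L}^\star(\hat{Q}_\varepsilon,\hat{V}_{\varepsilon^\star})$ the target arguments $\hat{V}_{t+1,\varepsilon^\star}$ are \emph{frozen} at the evaluation point $\varepsilon^\star$ and therefore carry no $\varepsilon$-dependence. The derivative then splits across the summands, so it suffices to differentiate a single weighted cross-entropy term $I_t(\hat{\pi})\,\mathcal{L}_{\mathrm{bce}}(\hat{Q}_{t,\varepsilon},\hat{V}_{t+1,\varepsilon^\star})$ in which only the prediction $\hat{Q}_{t,\varepsilon}$ moves, and then to reassemble the sum over $t$.

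First I would record the two elementary derivatives that drive everything. Since the submodel is $\hat{Q}_{t,\varepsilon}=\logit^{-1}(\logit\hat{Q}_t+\varepsilon)$, the inverse-logit chain rule gives $\partial_\varepsilon \hat{Q}_{t,\varepsilon}=\hat{Q}_{t,\varepsilon}(1-\hat{Q}_{t,\varepsilon})$; and differentiating the binary cross-entropy in its first argument gives $\partial_{\hat{Q}}\mathcal{L}_{\mathrm{bce}}(\hat{Q},v)=(\hat{Q}-v)/(\hat{Q}(1-\hat{Q}))$. The substantive step — and the reason a logistic fluctuation is exactly the right submodel — is that the factor $\hat{Q}(1-\hat{Q})$ cancels between these two, leaving the clean residual
\begin{align}
\partial_\varepsilon\big|_{\varepsilon^\star}\mathcal{L}_{\mathrm{bce}}(\hat{Q}_{t,\varepsilon},\hat{V}_{t+1,\varepsilon^\star})=\hat{Q}_{t,\varepsilon^\star}-\hat{V}_{t+1,\varepsilon^\star}.
\end{align}
Multiplying by the cumulative weight $I_t(\hat{\pi})=\prod_{s=1}^t dg_s/d\pi_s$ and summing over $t=1,\ldots,T$ then yields $\sum_{t=1}^T I_t(\hat{Q}_{t,\varepsilon^\star}-\hat{V}_{t+1,\varepsilon^\star})$, i.e.\ precisely the sequential-regression residual (the \emph{clever-covariate}) part of the efficient influence function from the first theorem, evaluated at the updated fit $\hat{Q}_{\varepsilon^\star}$.

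The main obstacle is reconciling this residual sum with the full $D^\star(\hat{Q}_{\varepsilon^\star},\hat{\pi})$ on the right-hand side, which additionally carries the baseline centering term $\hat{V}_{1,\varepsilon^\star}-\psi_0$ and appears with the opposite overall sign. Both discrepancies are structural rather than computational. The sign is the standard loss-versus-log-likelihood convention: $\mathcal{L}_{\mathrm{bce}}$ is a negative log-likelihood, so $\partial_\varepsilon\mathcal{L}^\star$ equals minus the Bernoulli score, which matches the martingale part of $D^\star$ up to that sign. The centering term is $\varepsilon$-invariant — it comes from the baseline-covariate factor of the likelihood, which carries no fluctuation parameter and is therefore annihilated by $\partial_\varepsilon$ — so it cannot be produced by the $\varepsilon$-derivative of $\mathcal{L}^\star$ as written.

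I expect the honest write-up to foreground the cross-entropy/sigmoid cancellation as the genuine content and to treat the sign and the $\varepsilon$-free centering term as bookkeeping. The cleanest faithful reading of the stated identity is that $\partial_\varepsilon|_{\varepsilon^\star}\mathcal{L}^\star$ reproduces exactly the $\varepsilon$-varying (martingale) component of $D^\star(\hat{Q}_{\varepsilon^\star},\hat{\pi})$ up to this overall sign; the constant centering term can be absorbed by taking $\psi_0$ to be the plug-in $P_n\hat{V}_{1,\varepsilon^\star}$, which makes it mean-zero by construction. The operational payoff, which is all the targeting step requires, is that the stationarity condition $P_n\partial_\varepsilon|_{\varepsilon^\star}\mathcal{L}^\star=0$ is then equivalent to the efficient-score equation $P_n D^\star(\hat{Q}_{\varepsilon^\star},\hat{\pi})=0$.
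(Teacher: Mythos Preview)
Your argument is correct and follows exactly the same route as the paper: apply the chain rule to the single-parameter logistic fluctuation, use the sigmoid/cross-entropy cancellation to collapse each summand to the weighted residual $I_t(\hat{\pi})\,(\hat{Q}_{t,\varepsilon^\star}-\hat{V}_{t+1,\varepsilon^\star})$, and identify this with the $t$-th component of the efficient influence function. Your additional remarks on the overall sign and the $\varepsilon$-free centering term $\hat{V}_{1,\varepsilon^\star}-\psi_0$ are in fact more precise than the paper's own proof, which asserts the sign $I_t(V_{t+1,\varepsilon^\star}-Q_{t,\varepsilon})$ without justifying the flip and never explicitly treats the centering term; your resolution (absorb the sign as the loss-versus-log-likelihood convention and note that the centering term vanishes once $\psi_0$ is replaced by the plug-in $P_n\hat{V}_{1,\varepsilon^\star}$, so that the stationarity equation and the efficient-score equation coincide) is exactly the right way to make the identity operationally honest.
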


See Section \ref{sec:proof} for the proof.

\begin{corollary} 
Suppose that we found an $\varepsilon^\star$ satisfying 
\begin{equation}\label{criteria for epsilon star}
    \partial_{\varepsilon}|_{\varepsilon^\star}P_n\mathcal{L}^\star(\hat{Q}_\varepsilon, \hat{V}_{\varepsilon^\star})=0,
\end{equation} then $\hat{Q}_{\varepsilon^\star}$ solves the efficient influence function.
\end{corollary}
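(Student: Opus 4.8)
The plan is to chain Theorem~\ref{thm:plf} together with the definition of the empirical solving criterion~\eqref{criteria for epsilon star}. Theorem~\ref{thm:plf} gives a trajectory-wise (i.e., per-observation) identity $\partial_\varepsilon|_{\varepsilon^\star}\mathcal{L}^\star(\hat Q_\varepsilon,\hat V_{\varepsilon^\star}) = D^\star(\hat Q_{\varepsilon^\star},\hat\pi)$. I would first take the empirical mean $P_n$ of both sides of that identity. Since $P_n$ is a finite linear operator (an average over the $n$ i.i.d. trajectories) it commutes with the partial derivative $\partial_\varepsilon$ evaluated at the fixed point $\varepsilon^\star$, so $P_n \partial_\varepsilon|_{\varepsilon^\star}\mathcal{L}^\star(\hat Q_\varepsilon,\hat V_{\varepsilon^\star}) = \partial_\varepsilon|_{\varepsilon^\star} P_n \mathcal{L}^\star(\hat Q_\varepsilon,\hat V_{\varepsilon^\star})$, and likewise $P_n D^\star(\hat Q_{\varepsilon^\star},\hat\pi) = P_n D^\star(\{\hat Q_{t,\varepsilon^\star},\hat\pi_t\}_{t=1}^\tau)$. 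Combining these two equalities yields
\begin{align}
    P_n D^\star(\{\hat Q_{t,\varepsilon^\star},\hat\pi_t\}_{t=1}^\tau)
    = \partial_\varepsilon|_{\varepsilon^\star} P_n \mathcal{L}^\star(\hat Q_\varepsilon,\hat V_{\varepsilon^\star}).
\end{align}

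Next I would invoke the hypothesis of the corollary, namely that $\varepsilon^\star$ is chosen so that the right-hand side vanishes, $\partial_\varepsilon|_{\varepsilon^\star} P_n \mathcal{L}^\star(\hat Q_\varepsilon,\hat V_{\varepsilon^\star}) = 0$. Substituting this into the displayed equation immediately gives $P_n D^\star(\{\hat Q_{t,\varepsilon^\star},\hat\pi_t\}_{t=1}^\tau) = 0$, which is precisely the statement that $\hat Q_{\varepsilon^\star}$ solves the efficient influence function equation. I would also remark that such an $\varepsilon^\star$ exists in practice: the map $\varepsilon\mapsto P_n\mathcal{L}^\star(\hat Q_\varepsilon,\hat V_{\varepsilon})$ (note the same $\varepsilon$ in both slots, traced out along the submodel) is a smooth one-dimensional function whose minimizer, or more generally any critical point, satisfies the score equation~\eqref{criteria for epsilon star}; since $\mathcal{L}_{\mathrm{bce}}$ composed with the logistic submodel is convex in $\varepsilon$, the minimizer is well-defined, so this is a genuine one-dimensional optimization problem.

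The only subtlety I anticipate — and it is a bookkeeping point rather than a genuine obstacle — is making sure the derivative in the corollary's hypothesis is interpreted as the derivative along the path where the fluctuation enters \emph{both} the $\hat Q$ argument and the $\hat V$ argument, whereas Theorem~\ref{thm:plf} is stated with $\hat V_{\varepsilon^\star}$ held at the fixed value $\varepsilon^\star$ while only the first argument fluctuates. These two derivatives agree at $\varepsilon=\varepsilon^\star$ because the extra contribution from differentiating through the second argument, $\partial_{\varepsilon'}|_{\varepsilon'=\varepsilon^\star}\sum_t I_t\,\mathcal{L}_{\mathrm{bce}}(\hat Q_{t,\varepsilon^\star},\hat V_{t+1,\varepsilon'})$, is itself (up to the telescoping index shift $t\to t+1$) a score of the same form evaluated at the fixed point, and the way the partial loss is constructed — summing the weighted cross-entropy terms with weights $I_t$ — is exactly designed so that this cross term reassembles into the same influence-function expression; hence differentiating along the diagonal versus along the first coordinate produces the same value at $\varepsilon^\star$. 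I would state this equivalence explicitly (or cite the construction in the proof of Theorem~\ref{thm:plf}) so that the substitution of the corollary's hypothesis into the conclusion of Theorem~\ref{thm:plf} is airtight, and then the corollary follows in one line.
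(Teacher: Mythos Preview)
Your core argument---take $P_n$ of both sides of Theorem~\ref{thm:plf}, commute the finite average with $\partial_\varepsilon$, and substitute the hypothesis---is correct and is exactly the (one-line) deduction the paper intends; the corollary is stated without proof precisely because it follows immediately from Theorem~\ref{thm:plf} in this way.

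Two of your auxiliary remarks, however, need correction. First, the ``subtlety'' you anticipate does not arise: in both Theorem~\ref{thm:plf} and the corollary the second argument is written $\hat V_{\varepsilon^\star}$, not $\hat V_\varepsilon$, so in both places the derivative is already the partial in the first slot with the second slot held fixed at $\varepsilon^\star$. No reconciliation between a ``diagonal'' derivative and a partial derivative is needed, and your claim that the two agree because the cross term ``reassembles into the same influence-function expression'' is not correct in general (the $\partial_{\varepsilon'}$ contribution involves terms like $-\logit\hat Q_{t,\varepsilon^\star}\cdot\partial_{\varepsilon'}\hat V_{t+1,\varepsilon'}$, which do not vanish or reduce to $D^\star$).

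Second, and relatedly, your existence remark is off: the minimizer of the diagonal map $\varepsilon\mapsto P_n\mathcal{L}^\star(\hat Q_\varepsilon,\hat V_\varepsilon)$ sets the \emph{total} derivative to zero, which is $\partial_1+\partial_2$, not $\partial_1$ alone; hence it does not directly yield~\eqref{criteria for epsilon star}. This is precisely why the paper's Algorithm~\ref{alg:targeting} is an \emph{iterative fixed-point} procedure---at each step one minimizes over $\varepsilon'$ with $\hat V_\varepsilon$ frozen, then updates $\varepsilon\leftarrow\varepsilon'$---rather than a single convex minimization. The corollary itself simply \emph{assumes} such an $\varepsilon^\star$ has been found, so existence is not part of its proof.
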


In practice, for the finite sample performance, we only need to solve it to the order of standard error with a $\sigma_n/\log n$ factor \cite{laanTLBook2011} (Algorithm \ref{alg:targeting}).

\begin{algorithm}
    \caption{Temporal-Difference Targeting}
    \label{alg:targeting}
\begin{algorithmic}[1]
   
   \STATE $\varepsilon\leftarrow 0$
   \REPEAT
   \STATE$\varepsilon\leftarrow\mathrm{argmin}_{\varepsilon'} P_n\mathcal{L}^\star(\hat{Q}_{\varepsilon'}, \hat{V}_{\varepsilon}).$
   \STATE $\hat{\sigma}_n\leftarrow \sqrt{n^{-1}P_nD^{\star2}(\hat{Q}_{\varepsilon}, \hat{\pi})}$
   \UNTIL{$P_nD^\star(\hat{Q}^{g}_{\varepsilon},\hat{\pi})
   <\hat{\sigma}_n/\log n$}
   \STATE $\hat{\psi}^\star_{n}\leftarrow P_n\hat{V}_{1,\varepsilon}(pa(A_1))$
   \STATE 95\% CI as $\hat{\psi}^\star_{n}\pm 1.96 \cdot \hat{\sigma}_n$
\end{algorithmic}
\end{algorithm}

\paragraph{Convergence of Algorithm \ref{alg:targeting}}
The investigation of $\mathcal{L}_{\mathrm{bce}}(\hat{Q}_{t,\varepsilon}, \hat{V}_{t,\varepsilon})(O)$ for different $t$ and $O$'s as a function of $\varepsilon$ suggests that they admit different bell curve shapes concentrating at different $\varepsilon$'s and have different spread out levels. Thus, by summing up $\mathcal{L}_{\mathrm{bce}}(\hat{Q}_{t,\varepsilon}, \hat{V}_{t,\varepsilon})(O)$ across $t$ and across $O$'s as $P_n\mathcal{L}^\star(\hat{Q}_{\varepsilon}, \hat{V}_{\varepsilon})$ as a function of $\varepsilon$ will fluctuate a lot and we expect a local minima and local maxima around the neighborhood of $\varepsilon=0$. And thus the convergence of the algorithm is highly probable and we don't discover any issue in our simulations.

\paragraph{Comparison to LTMLE}
In the LTMLE, we only need a good estimate of ${Q}_\tau$ and then do backward sequential regression and targeting as mentioned in \cite{laanTargetedMinimumLoss2012}. However, the problem is the error in the estimation of ${Q}_\tau$ can propagate as we progress back to get ${Q}^{*}_{\tau-1},...,{Q}^{*}_{1}$. Nonetheless, after our initial transformer step, we have good initial estimates for all ${Q}_{1},...,{Q}_{\tau}$. So, instead of only relying on a good estimate of ${Q}_\tau$, our algorithm makes uses of all of them. and doing targeting across $t$ with $o(\varepsilon)$ fluctuation at each $t$ level. Thus, we are able to pool information across time when doing the targeting step.

\subsubsection{Sequential Targeting}
Alternatively, one could apply a sequential targeting procedure that is very similar to LTMLE but with given initials generated from the transformer step.

\paragraph{Submodel} We fluctuate each component of the initial fit $\hat{Q}=[\hat{Q}_{t}]_{t=1}^\tau$ along a model as 
\begin{align}\label{submodel_t}
    \logit \hat{Q}_{t,\varepsilon_t} = \logit \hat{Q}_t + \varepsilon_t.
\end{align}

\paragraph{Loss Function}
Starting from $t=\tau$, given we have found $\varepsilon_{t+1}^\star$, among individuals whose $T>t-1$, we search for empirical loss minimizer $\varepsilon_t^\star$ with respect to the loss function $\mathcal{L}^\star_t$ as,
\begin{align}
    \mathcal{L}^\star_t(\hat{Q}_{t,\varepsilon_t}, \hat{V}_{t+1,\varepsilon_{t+1}^\star})=I_{t}(\hat{\pi})\mathcal{L}_{\mathrm{bce}}(\hat{Q}_{t,\varepsilon_t}, \hat{V}_{t+1,\varepsilon_{t+1}^\star}),
\end{align}
where $\hat{V}_{t+1,\varepsilon_{t+1}^\star}= \mathbb E_{A_{t+1} \sim g_{t+1}} \big[\hat{Q}_{t+1,\varepsilon_{t+1}^\star}\big]$ when $T>t$ and $\hat{V}_{t+1,\varepsilon_{t+1}^\star}=Y_T$ when $T=t$. To initialize, we set $\varepsilon_{\tau+1}^\star=0$.

\begin{lemma} 
Suppose that we found $\varepsilon^\star_\tau,...\varepsilon^\star_1$ sequentially as mentioned above, then $\{\hat{Q}_{\varepsilon_t^\star}\}_{t=1}^\tau$ solves the efficient influence function. 
\end{lemma}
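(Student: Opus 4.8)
The plan is to show that the backward sequence of first-order conditions defining $\varepsilon^\star_\tau,\dots,\varepsilon^\star_1$ sums, term by term, to the empirical efficient-influence-function equation $P_nD^\star(\{\hat{Q}_{\varepsilon^\star_t},\hat{\pi}_t\}_{t=1}^\tau)=0$; this is the sequential-regression analogue of the one-dimensional targeting argument behind Theorem~\ref{thm:plf}.

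First I would compute the score of the logistic submodel \eqref{submodel_t} under binary cross-entropy. With $\hat{Q}_{t,\varepsilon_t}=\mathrm{expit}(\logit\hat{Q}_t+\varepsilon_t)$ we have $\partial_{\varepsilon_t}\hat{Q}_{t,\varepsilon_t}=\hat{Q}_{t,\varepsilon_t}(1-\hat{Q}_{t,\varepsilon_t})$ and $\partial_{\hat y}\mathcal{L}_{\mathrm{bce}}(\hat y,y)=-(y-\hat y)/[\hat y(1-\hat y)]$, so the chain rule collapses to $\partial_{\varepsilon_t}\mathcal{L}_{\mathrm{bce}}(\hat{Q}_{t,\varepsilon_t},y)=\hat{Q}_{t,\varepsilon_t}-y$. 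Since $I_t(\hat{\pi})\ge 0$ is free of $\varepsilon_t$ and $\varepsilon_t\mapsto\mathcal{L}_{\mathrm{bce}}(\mathrm{expit}(z+\varepsilon_t),y)$ is convex, the restricted-sample objective $\varepsilon_t\mapsto P_n\mathcal{L}^\star_t(\hat{Q}_{t,\varepsilon_t},\hat{V}_{t+1,\varepsilon^\star_{t+1}})$ is convex, and I would argue its minimizer $\varepsilon^\star_t$ is interior and hence solves
\[
P_n\!\left[\mathbbm{1}\{t\le T\}\,I_t(\hat{\pi})\,\big(\hat{V}_{t+1,\varepsilon^\star_{t+1}}-\hat{Q}_{t,\varepsilon^\star_t}\big)\right]=0 ,
\]
where $\mathbbm{1}\{t\le T\}$ encodes the at-risk restriction $\{T>t-1\}$ (dividing by the subsample size does not move the stationary point) and annihilates the degenerate post-event nodes, with boundary convention $\hat{V}_{T+1,\varepsilon^\star}=Y_T$.

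Next I would exploit the backward ordering. When step $t$ is executed, $\varepsilon^\star_{t+1},\dots,\varepsilon^\star_\tau$ are already frozen, so $\hat{V}_{t+1,\varepsilon^\star_{t+1}}=\EE_{A_{t+1}\sim g_{t+1}}[\hat{Q}_{t+1,\varepsilon^\star_{t+1}}]$ is exactly the object sitting in the $V_{t+1}$-slot of $D^\star$ evaluated at the \emph{final} fit, and choosing $\varepsilon^\star_t$ perturbs none of the stationarity equations established for $s>t$ (each involves only the frozen $\hat{Q}_{s,\varepsilon^\star_s}$, $\hat{V}_{s+1,\varepsilon^\star_{s+1}}$). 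Summing the $\tau$ stationarity equations and using $\sum_{t=1}^\tau\mathbbm{1}\{t\le T\}(\cdot)=\sum_{t=1}^T(\cdot)$ yields $P_n\sum_{t=1}^T I_t(\hat{\pi})\big(\hat{V}_{t+1,\varepsilon^\star_{t+1}}-\hat{Q}_{t,\varepsilon^\star_t}\big)=0$, which is the empirical mean of the summation part of $D^\star(\{\hat{Q}_{\varepsilon^\star_t},\hat{\pi}_t\})$. The leftover term $V_1-\psi_0$, evaluated at the fit, is $\hat{V}_{1,\varepsilon^\star_1}(pa(A_1))-\psi(\hat{P}^\star_n)$; since the plug-in value is $\psi(\hat{P}^\star_n)=\hat{\psi}^\star_n=P_n\hat{V}_{1,\varepsilon^\star_1}(pa(A_1))$ (the marginal law of the initial covariates taken empirical, as in \eqref{final-step-plug-in}), its empirical mean vanishes. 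Adding the two pieces gives $P_nD^\star(\{\hat{Q}_{\varepsilon^\star_t},\hat{\pi}_t\}_{t=1}^\tau)=0$, i.e.\ $\{\hat{Q}_{\varepsilon^\star_t}\}_{t=1}^\tau$ solves the efficient influence function.

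I expect the difficulty to be purely bookkeeping rather than conceptual: pinning down the sign in the BCE--logistic score so the stationarity equations line up termwise with $D^\star$; making the ``freezing'' argument rigorous, i.e.\ that the backward sweep leaves earlier equations intact and that the $\hat{V}_{t+1,\varepsilon^\star_{t+1}}$ inside $\mathcal{L}^\star_t$ is literally the $V_{t+1}$ entry of the final $D^\star$; and reconciling the at-risk restriction and post-event degeneracy with the $\sum_{t=1}^T$ truncation and the convention $V_{T+1}=Y_T$. A minor point is interiority of each $\varepsilon^\star_t$, secured by convexity of $P_n\mathcal{L}^\star_t$ and non-degeneracy of the at-risk sample; when an exact root is unavailable one falls back on the $\hat{\sigma}_n/\log n$ tolerance of Algorithm~\ref{alg:targeting}, exactly as in the one-dimensional case.
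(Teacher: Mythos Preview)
Your proposal is correct and follows essentially the same approach the paper takes: the paper does not give a separate proof for this lemma, but the derivative calculation you perform is exactly the one displayed in the proof of Theorem~\ref{thm:plf} (namely $\partial_{\varepsilon_t}\mathcal{L}^\star_t=I_t(\hat\pi)(\hat V_{t+1,\varepsilon^\star_{t+1}}-\hat Q_{t,\varepsilon_t})$), applied componentwise to each backward step, with the $V_1-\psi_0$ term killed by the plug-in definition~\eqref{final-step-plug-in}. Your observation that the sequential procedure in fact zeros each time-$t$ component of $D^\star$ separately (and hence a fortiori their sum) is the intended reading of the lemma, and your bookkeeping on the at-risk indicator, the freezing of $\varepsilon^\star_{t+1},\dots,\varepsilon^\star_\tau$, and the boundary convention $\hat V_{T+1,\varepsilon^\star}=Y_T$ is precisely what is needed.
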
 

\begin{algorithm}
    \caption{Sequential Targeting}
    \label{alg:seq-targeting}
\begin{algorithmic}[1]
   
   \STATE $\varepsilon_{\tau+1}^\star\leftarrow 0$
   \FOR{$t=\tau$ {\bfseries to} $1$}
   \STATE$\varepsilon_t^\star\leftarrow\mathrm{argmin}_{\varepsilon_t} P_n \mathbbm{1}\{T\geq t\}\mathcal{L}^\star(\hat{Q}_{t,\varepsilon_t}, \hat{V}_{t+1,\varepsilon_{t+1}^\star}).$
   \ENDFOR
   \STATE $\hat{\psi}^\dagger_{n}\leftarrow P_n\hat{V}_{1,\varepsilon_1^\star}(pa(A_1))$
   \STATE $\hat{\sigma}_n\leftarrow \sqrt{n^{-1}P_nD^{\star2}(\{\hat{Q}_{\varepsilon_t^\star}\}_{t=1}^\tau, \hat{\pi})}$
   \STATE 95\% CI as $\hat{\psi}^\dagger_{n}\pm 1.96 \cdot \hat{\sigma}_n$
\end{algorithmic}
\end{algorithm}

\paragraph{Comparaison to LTMLE}
While the error can still propagate as we move back in time,  the error propagates only through the targeting steps whereas in LTMLE the error can also propagate through regressions. At each time step $t$, LTMLE needs to first regress $\hat{V}_{t+1,\varepsilon^\star_{t+1}}$ on $pa(Y_t)$ to get an estimate $\hat{Q}_t$ and then perform the targeting through the submodel in(\ref{submodel_t}). However, we only use initial estimate $\hat{Q}_t$ from our transformer fit and it does not depend on $\varepsilon^\star_{t+1},\ldots,\varepsilon^\star_{\tau}$.

\paragraph{Why not targeting through additional loss function}
As in DeepACE, the targeting can be performed through introducing additional loss components to further train the transformer we have build in the first step. This additional loss function will have its derivative equal to the efficient influence function. However, we find that the penalty factor before this loss function is hard to tune and in near all cases, it is hard to guarantee the EIF is solved and most of the time we will hurt our initial fits as shown in Appendix \ref{sec:continuous outcome experiment}.

\begin{figure*}[ht]
    \centering
    \includegraphics[width=\textwidth]{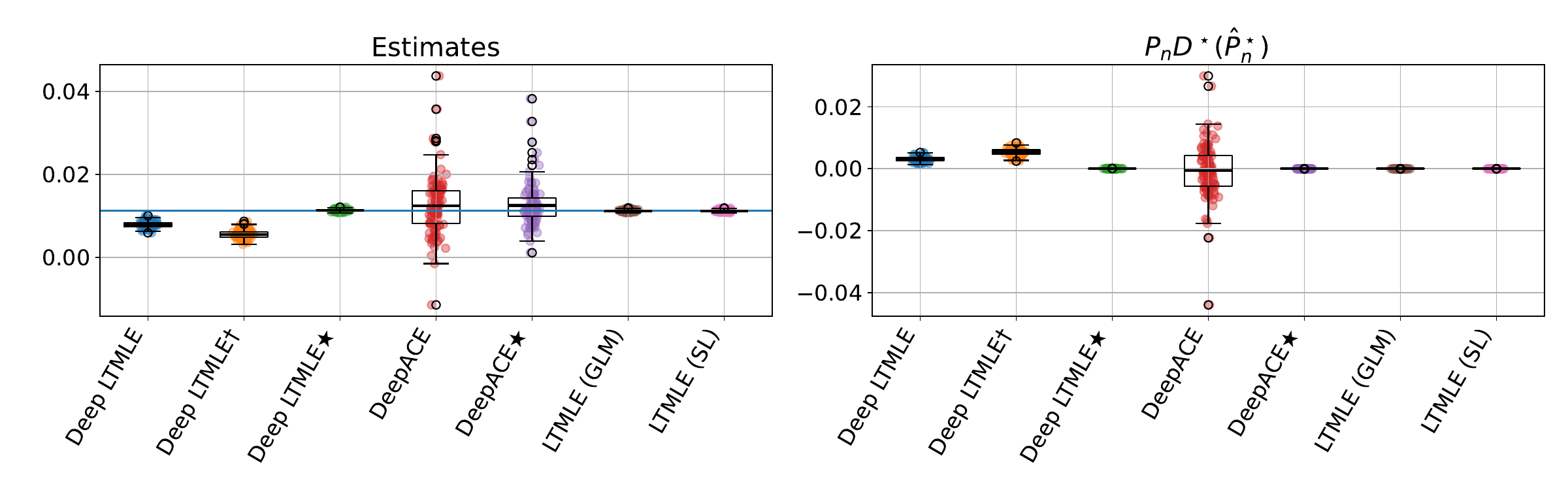}
    \caption{{\bf Results from simple synthetic data with continuous outcome.} Left: Sampling distributions of estimates. Right: Sampling distributions of empirical means of estimated efficient influence functions. Deep LTMLE: initial estimate with TDHT; Deep LTMLE$\dagger$: TDHT with sequential targeting; Deep LTMLE$\star$: TDHT with temporal-difference targeting; DeepACE$\star$: DeepACE with temporal-difference targeting; LTMLE (GLM): LTMLE with GLM; LTMLE (SL): LTMLE with super learner of GLM, MARS, and XGBoost.}
    \label{fig:continuous}
\end{figure*}

\begin{table*}[ht]
\centering
\caption{Results from complex synthetic data.}
\label{table2}
\resizebox{\textwidth}{!}{
    \begin{tabular}{lrrrrrrcccrrr}
    \toprule
     & \multicolumn{3}{c}{Bias} & \multicolumn{3}{c}{RMSE} & \multicolumn{3}{c}{Coverage} & \multicolumn{3}{c}{Mean $\hat{\sigma}_n$} \\
    Model & $\tau = 10$ & $\tau = 20$ & $\tau = 30$ & $\tau = 10$ & $\tau = 20$ & $\tau = 30$ & $\tau = 10$ & $\tau = 20$ & $\tau = 30$ & $\tau = 10$ & $\tau = 20$ & $\tau = 30$ \\
    \midrule
    LTMLE (GLM) & 0.0230 & 0.0766 & 0.1344 & 0.0265 & 0.0796 & 0.1381 & 1.00 & 1.00 & 1.00 & 0.43 & 0.69 & 0.76 \\
    LTMLE (SL) & 0.0144 & 0.0297 & 0.0477 & 0.0185 & 0.0344 & 0.0545 & 1.00 & 1.00 & 1.00 & 0.31 & 0.40 & 0.45 \\
    DeepACE & 0.0055 & -0.0211 & -0.0714 & 0.0397 & 0.0672 & 0.0962 & 1.00 & 1.00 & 1.00 & 0.69 & 0.76 & 0.73 \\
    \midrule
    Deep LTMLE & 0.0181 & 0.0292 & 0.0503 & 0.0263 & 0.0332 & 0.0533 & 0.99 & 0.96 & 0.75 & 0.17 & 0.10 & 0.06 \\
    Deep LTMLE$\dagger$ & 0.0319 & 0.0327 & 0.0535 & 0.0571 & 0.0379 & 0.0571 & 0.99 & 0.97 & 0.78 & 0.19 & 0.10 & 0.07 \\
    Deep LTMLE$\star$ & 0.0156 & 0.0307 & 0.0496 & 0.0218 & 0.0344 & 0.0541 & 0.99 & 0.95 & 0.78 & 0.17 & 0.09 & 0.07 \\
    \bottomrule  
    \end{tabular}
}
\end{table*}

\section{Experiments}
We conducted two experiments. In the first experiment, we compare the bias, root-mean-squared-error (RMSE), and coverage probability, of our estimator with existing estimators based on 100 times of estimations for both continuous and survival outcomes. The second experiment is an application of our proposed method to a real-world data. 

\subsection{Synthetic Data with Continuous Outcome}

\label{sec:continuous outcome experiment}

First, we start our experiment with a very simple data generating process with continuous outcome, $n=500$, and $\tau=10$. The data generating proccess is described in the Section \ref{sec:continuous synthetic data}. After fitting DeepACE, we additionally performed our targeting precedures on the fit. 

The results were shown in Figure \ref{fig:continuous}. Initial fits of Deep LTMLE and DeepACE had comparable bias. Even with the targeting loss, DeepACE failed to solve the efficient influence function. On the other hand, due to the separation of the targeting step in our method, we managed to solve it completely and succeeded in correcting bias.

\subsection{Synthetic Data with Survival Outcome}

\begin{table*}[ht]
\centering
\caption{Results from semi-synthetic data with unmeasured confounding}
\label{table-r2}
\resizebox{\textwidth}{!}{
    \begin{tabular}{lrrrrrrcccccc}
    \toprule
     & \multicolumn{3}{c}{Bias} & \multicolumn{3}{c}{RMSE} & \multicolumn{3}{c}{Coverage} & \multicolumn{3}{c}{Mean $\hat{\sigma}_n$} \\
    Model & $\tau = 10$ & $\tau = 20$ & $\tau = 30$ & $\tau = 10$ & $\tau = 20$ & $\tau = 30$ & $\tau = 10$ & $\tau = 20$ & $\tau = 30$ & $\tau = 10$ & $\tau = 20$ & $\tau = 30$ \\
    \midrule
    LTMLE (SL) & 0.0075 & 0.0341 & 0.0574 & 0.0138 & 0.0491 & 0.0786 & 0.70 & 0.45 & 0.25 & 0.09 & 0.12 & 0.14 \\
    DeepACE & -0.0174 & -0.0434 & -0.0770 & 0.0788 & 0.1154 & 0.1341 & 1.00 & 1.00 & 1.00 & 0.67 & 0.78 & 0.86 \\
    \midrule
    Deep LTMLE & -0.0002 & 0.0108 & 0.0041 & 0.0162 & 0.0720 & 0.0772 & 1.00 & 0.95 & 1.00 & 0.18 & 0.27 & 0.32 \\
    \bottomrule
    \end{tabular}
}
\end{table*}

Next, we evaluated Deep LTMLE under a highly complex data-generating process with survival outcomes, five-dimensional time-dependent covariates, non-Markovian dependencies, $n=1000$, and $\tau=10,20,30$, imitating the setups from previous studies \cite{Bica2020Estimating, Frauen_Hatt_Melnychuk_Feuerriegel_2023}. See Section \ref{sec:complex synthetic data} for details.

Results are presented in Table \ref{table2}. We observe that Deep LTMLE and LTMLE with a super learner on average achieves a lower RMSE compared to other methods, particularly in scenarios with larger $\tau$, indicating its robustness in complex and realistic scenarios without Markovian dependencies. Benefits by our temporal difference targeting procedures are obvious for $\tau=10$. While Deep LTMLE's coverage probability diminished at $\tau=30$, the confidence intervals generated by LTMLE and DeepACE were notably over-conservative with large estimated standard errors. 

The pronounced bias of DeepACE can likely be attributed to three factors. First, DeepACE's use of the squared-error-loss for the outcome is known to induce greater bias in sparse outcomes, a common scenario in survival analysis, as opposed to the logistic loss used in our approach \cite{gruber_targeted_2010}. Second, DeepACE failed to solve the efficient influence function. Third, DeepACE does not account for the degeneration of the survival outcome. 

\paragraph{Simple Synthetic Data with Survival Outcome}
We also conducted an eperiment with a very simple survival synthetic data with one-dimensional time-dependent covariates, $n=1000$, and $\tau=10,20,30$. Although LTMLE with GLM is expected to have strong performance in this experiment, Deep LTMLE remains highly competitive in this context, equalling LTMLE’s performance (Section \ref{sec: results with simple synthetic survival data}).

\subsection{Semi-Synthetic Data}

To evaluate the performance of the proposed methods, we generated realistic data from Circulatory Risk in Communities Study (CIRCS) \cite{yamagishi2019}, a long-term on-going cardiovascular epidemiological cohort study, lasting over a half century. See Section \ref{sec:semi-synth} for the detail.

Table \ref{table-r2} shows the results with semi-synthetic data with unmeasured confounding, which reflects a real world setting. Deep LTMLE performed best in terms of bias for all time horizons. Furthermore, as the time horizon increases from 10 to 30, LTMLE’s coverage probability drops as low as 0.25. On the other hand, Deep LTMLE has nominal coverage even in the longest time-horizon setting. 

\subsection{Real World Data}

\label{sec:rwd}

We applied Deep LTMLE to real world data from CIRCS. We estimated the counterfactual mean outcomes under the standard blood pressure (SBP) management strategy that controls SBP less than 140 mmHg and the intensive blood pressure management strategy with SBP less than 120 mmHg after the 30 years of sustained management. 

In real world applications, we often encounter with practical problems of censoring, that is loss of follow-up for some reasons. Our model can be easily generalized to cover this setting with a slight modification by adding censoring nodes. Details are described in Section \ref{sec:extended-model} of Appendix.

The results were shown in Figure \ref{fig:circs}. The average treatment effect (ATE) of the intensive management strategy over the standard management strategy first increased with a peak at 20 years after baseline and then decreased with a fluctuation. The direction and trend of ATE is consistent with the difference of empirical means of cumulative outcomes between two groups followed the two strategies. 

\begin{figure}[ht]
    \includegraphics[width=0.9\columnwidth]{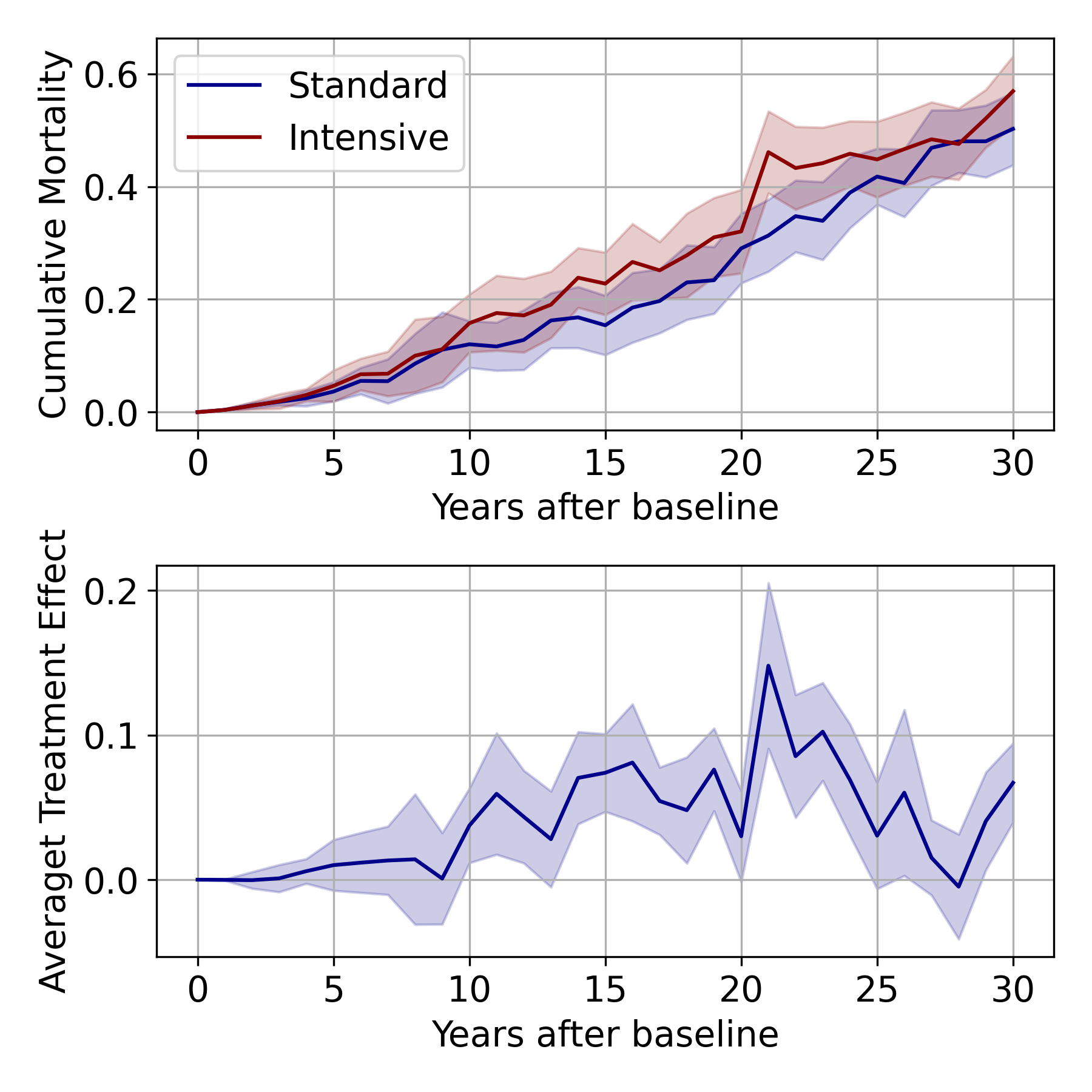}
    \caption{\small {\bf Counterfactual mean outcomes and 95\% simultaneous confidence intervals according to standard and intensive treatment policies among 13,485 participants of the CIRCS.} }
    \label{fig:circs}
\end{figure}

\subsection{Computation Details}
DeepACE and Deep LTMLE were run on a GPU (Tesla T4) with 16 GB memory and LTMLE on CPU (Intel Xeon Skylake 6230 @ 2.1 GHz) with 40 cores and 96 GB memory. We used the R package \texttt{ltmle} with GLM and a super learner (SL) library consisting of GLM, maltivariate adaptive regression spline with \texttt{earth} package, and \texttt{xgboost} for the simple synthetic data and the real world data \cite{ltmle_rpackage, superlearner_rpackage, earth_rpackage, xgboost_rpackage}. Confidence intervals for LTMLE was constructed based on its estimate of the efficient influence function. 

\begin{table}[ht]
\centering
\caption{Running time with complex synthetic data}
\label{tableTime2}
    \begin{tabular}{lccc}
    \toprule
     & \multicolumn{3}{c}{Time, sec} \\
    Model & $\tau = 10$ & $\tau = 20$ & $\tau = 30$ \\
    \midrule
    LTMLE (SL) & 271 & 958 & 2122 \\
    DeepACE & 53 & 54 & 133 \\
    Deep LTMLE & 38 & 39 & 116 \\
    \bottomrule
    \end{tabular}
\end{table}

As shown in Table \ref{tableTime2}, Deep LTMLE leverages GPU acceleration to achieve significantly faster processing times than LTMLE, presenting a substantial computational benefit for analyses involving extensive time horizons and high-dimensional time-dependent covariates. Code for experimens with synthetic data is available at https://github.com/shirakawatoru/dltmle-icml-2024.

\section{Discussion}

Our method assumes the sequential randomization and the positivity assumption on the intervention mechanism to identify the counterfactual outcome from observational data. However, to our surprise, in semi-synthetic data simulations, we found that when there is unmeasured confounding violating the sequential randomization assumption rely on, our method is very robust and could even provide robust inference. Furthermore, our proposed model does not currently address several complexities often found in real-world data, such as visiting processes, competing risks, and continuous time horizons. These challenges will be the focus of our future research efforts.

\section{Conclusion}

In this paper, we propose a variant of LTMLE that leverages the sequential learning capabilities of transformers. This approach enables simultaneous fitting of the entire LTMLE, allowing us to target the mean survival under dynamic interventions directly through weighting the loss function with cumulative inverse probabilities of intervention. The proposed method performs competitively with asymptotically efficient estimators in low-dimensional settings and exceeds the performance of existing models in high-dimensional scenarios. Scalability of our model to larger and longer datasets was implied. We applied our method to real world data and demonstrated a causal inference on the effect of sustained blood pressure management strategies on total mortality.

\section*{Acknowledgement}
This research is funded by NIH and Berkeley School of Public Health, Interdisciplinary Collaborative Research Grant. TS is supported by Fulbright scholarship program. The authors thank Dr. Ahmed Alaa at University of California, San Francisco and Berkeley for valuable discussions. The authors acknowledge the CIRCS investigators team for providing the real world data for experiments; Dr. Akihiko Kitamura at Yao City, Dr. Masahiko Kiyama at Osaka Center for Prevention of Cardiovascular Diseases, Dr. Takeo Okada at Osaka Center for Prevention of Cardiovascular Diseases, Dr. Yuji Shimizu at Osaka Center for Prevention of Cardiovascular Diseases, Dr. Hironori Imano at Kinki University, Dr. Tetsuya Ohira at Fukushima Prefeture Medical University, Dr. Kazumasa Yamagishi at Tsukuba University, and Dr. Isao Muraki at Osaka University.

\section*{Impact Statement}

The present study may have an impact on generating evidence for medical practice. Since the majority of medical evidence on the effect of treatments such as blood pressure control on the incidence of diseases or mortality are based on a single time point intervention study, the present study which takes multiple point intervention and long past patient histories into account might trigger the rethinking of the methodology for evidence synthesis and potentially affect the medical practice in the future after thorough statistically rigorous evaluation of the method and analyses of both synthetic and real world data.

\bibliography{main}
\bibliographystyle{icml2024}

\newpage
\appendix
\onecolumn

\section{Notation}

Here we list notations used in the article.

\begin{center}
\begin{tabular}{ll}
     $O$ & Observed variables $O=(W=W_0, L_1, A_1, Y_1, L_2, A_2, Y_2, \ldots, L_\tau, A_\tau, Y_\tau)$ \\
     $\tau$ & Maximum length of time-horizon \\
     $T$ & Stopping time \\
     $W$ & Baseline covariates \\
     $L_t$ & Time-dependent covariates (states) \\
     $A_t$ & Time-dependent treatments (controls) \\
     $Y_t$ & Outcomes. In survival case, binary failure indicator defined as $Y_t = \mathbbm{1}\{T\le t\}$ \\
     $Y$ & Outcome at the end of the trajectory: $Y=Y_T$ \\
     $pa(\bullet)$ & Parent nodes of a node $\bullet$. For example, $pa(L_t)=(W,L_{1:t-1}, A_{1:t-1}, Y_{1:t-1})$ \\
     $P_0$ & The true distribution of the observed variable\\
     $\hat{P}_n$ & Estimator of $P_0$ \\
     $\pi$ & Propensity scores $\pi=[\pi_t]_1^\tau$ with $\pi(da_t|pa(a_t))=\mathbb{P}(da_t|pa(a_t))$ \\
     $g$ & User-specified treatment policies $g=[g_t]_{t=1}^\tau$ \\
     $\psi$ & Target functional $\psi(P)=\mathbb{E}_gY$ \\
     $\psi_0$ & True parameter $\psi_0 = \psi(P_0)$ \\
     $\hat{\psi}_n$ & Estimator of $\psi_0$ \\
     $\hat{\sigma}_n$ & Estimator of the standard error of the estimator $\hat{\psi}_n$ \\
     $Q$ & State-action value functions $Q=[Q_t]_{t=1}^\tau$ \\
     $Q_t$ & $=Q_t(pa(Y_t))=\mathbb{E}_g[Y|pa(Y_t)]$ \\
     $V$ & Value functions $V=[V_t]_{t=1}^{\tau+1}$ \\
     $V_t$ & $=V_t(pa(A_t))=\mathbb{E}_g[Y|pa(A_t)]$ for $t=1,\ldots,\tau$ and $V_{\tau+1}=Y_\tau$\\
     $I_t$ & Clever covariates (importance weights) $I_t=\prod_{s=1}^t dg_s/d\pi_s(O)$ \\
     $D^\star$ & Efficient influence function of $\psi$: $D^\star(Q, V; \pi) = V_1 - \psi_0 + \sum_{t=1}^\tau (V_{t+1} - Q_t)$ \\
     $Q_\varepsilon$ & Local least favorable submodel $Q_\varepsilon=[Q_{t,\varepsilon}]_{t=1}^\tau$ \\
     $Q_t$ & $\logit Q_{t,\varepsilon} = \logit Q_t +\varepsilon$ \\
     $V_\varepsilon$ & Local least favorable submodel $V_\varepsilon=[V_{t,\varepsilon}]_{t=1}^{\tau+1}$  \\
     $V_t$ & $\logit V_{t,\varepsilon} = \logit V_t +\varepsilon$ for $t=1,\ldots,\tau$ and $V_{\tau+1,\varepsilon}=V_{\tau+1}$ \\
     $\mathcal{L}(Q, V)$ & Loss function for temporal difference learning \\
     $\mathcal{L^\star}$ & Loss function for targeting \\
     $\alpha$ & Weight for the propensity loss (hyperparameter) \\
     $Pf$ & Mean of a function $f$ under the distribution $P$: $Pf=\int fdP$ \\
     $H(\bullet_t)$ & Embedding of a node $\bullet_t$\\
     $h_\bullet(\bullet_t)$ & Type embedding of a node $\bullet_t$\\
     $E_t$ & Positional encoding at time $t$ \\
     $f_X$ & production function of a node $X$
     
\end{tabular}
    
\end{center}

\section{Proof}

\label{sec:proof}

\begin{proof}[Proof of Theorem \ref{thm:plf}]
    A direct calculation shows
    \begin{align*}
        \partial_\varepsilon \mathcal{L}^\star_t(Q_{t,\varepsilon}, V_{t+1,\varepsilon^\star})
        &= I_t(\pi)[\partial_\varepsilon Q_{t,\varepsilon}] [\partial_{Q_{t,\varepsilon}}\mathcal{L}_{\mathrm{bce}}(Q_{t,\varepsilon}, V_{t+1,\varepsilon^\star})] = I_{t}(\pi)(V_{t+1,\varepsilon^\star} - Q_{t,\varepsilon}).
    \end{align*}
    Substitution of $\varepsilon^\star$ to $\varepsilon$ yields the $t$-th component of the efficient influence function \eqref{eq:eif} at $(Q_{\varepsilon^\star}, V_{\varepsilon^\star}, \pi)$.
\end{proof}

\section{Review of TMLE}

\label{sec:tmle}

\subsection{Structural Causal Model}

\label{sec:scm}

We assume each node depends on the all previous nodes in the trajectory, that is, we do not assume the Markovian property. And each node $X$ is produced from the parent nodes $pa(X)$ and independent noise random variables $U_X$ by a measurable function $f_X$: $X=f_X(pa(X),U_X)$. This production function $f_X$ induces a conditional distribution $P_{X|H}$ of $X$ given $H=pa(X)$ by pushing forward the distribution of noise variable: $P_{X|H}(A|h)=(P_{U_X}\circ f_X^{-1}(h, \cdot))(A)$ for all measurable $A\subset\mathcal{X}$, where $\mathcal{X}$ is a domain of random vector $X$. Starting from nodes without parents including noise nodes and their distributions, production functions and their causal structure, which can be described by a directed acyclic graph over the ovservables, generate the joint distribution of the observed random variables. With our particular data in longitudinal setting, we define the propensity score $\pi_t=P_{A_t|pa(A_t)}$, where $pa(A_t)=pa(A_t)$ is the patient history before the node $A_t$. We use the same symbol for the production function if the treatment assignment is deterministics, that is, there is no noise variable in generating the treatment node: $d\pi_t(A_t|pa(A_t))=1$ if $A_t=a_t$ for some specific $a_t$.

\subsection{Causal Target Parameter and Identification}

Our target parameter is the counterfactual mean of the final outcome $Y$ under the user-specified dynamic treatment policy $g=(g_t)$. This is the mean of counterfactual outcome which is produced by replacing $\pi_t$ with $g_t$ in the structural causal model:
\begin{equation}
    \psi^g(P) = EY^g.
\end{equation}
To identify this causal target paratmer from observatoinal data, we assume the following conditions of the positiviy:
\begin{equation}
    g \ll \pi,    
\end{equation}
and the sequential randomization:
\begin{equation}
    Y^g \perp A_t \mid pa(A_t) \text{ for } t=1,\ldots,\tau.    
\end{equation}
Note that the consistency $Y=Y^{\pi}$ usually stated in the causal inference literature is a consequence of the definition of counterfactual outcome in our structural causal model. Under these identifiability conditions, this parameter is identified through g-formula that is the mean of $Y$ under the counterfactual distribution which is given by replacing distributions $\pi_t$ with $g_t$:
\begin{equation}
    EY^g = E_gY.
\end{equation}
Then the problem reduced to the estimation of the statistical parameter:
\begin{equation}
    \psi(P) = E_gY.
\end{equation}

\subsection{TMLE} \label{TMLE-appendix}
Bias correction by TMLE is based on the following first order approximation of the target functional around the true distribution $P_0$ \cite{laanTMLE2006,laanTLBook2011,kennedySemiparametricDoublyRobust2022}:
\begin{equation}
    \label{eq:von-mises}
    \psi(\hat{P}_n) - \psi(P_0) = -P_0D^\star(\hat{P}_n) + R_2(\hat{P}_n,P_0),
\end{equation}
where $D^\star$ is called influence function and $R_2$ is the second order remainder. This equation is the infinite dimensional extension of Taylor expansion. 

The right hand side of this equation can be further written as:
\begin{equation}
    -P_nD^\star(\hat{P}_n) + (P_n-P_0)\big[D^\star(\hat{P}_n) -D^\star(P_0)\big]  + (P_n-P_0)D^\star(P_0)+ R_2(\hat{P}_n,P_0),
\end{equation}
whose second term called empirical process term converges to zero in the rate of square root of $n$ if $D^\star(\hat{P}_n), D^\star(P_0)$ belong to the Donsker class and the $L^2(P_0)$-distance between $D^\star(\hat{P}_n)$ and $D^\star(P_0)$ converges to zero in probability. Given a good initial fit $\hat{P}_n$ of $P_0$, above conditions are usually satisfied and, in addition, we require $R_2(\hat{P}_n,P_0)=o_{P_0}(n^{-1/2})$ which we will look at closely in Section \ref{sec:dr}. By further using the fact about the influence function that $P_0D^\star(P_0)=0$, the right hand side reduced to
\begin{equation}
    -P_nD^\star(\hat{P}_n) + P_nD^\star(P_0)+ o_{P_0}(n^{-1/2}).
\end{equation}

Now, the idea is to find $\hat{P}_n^\star$ in the close neighborhood of $\hat{P}_n$ that solves the empirical analog of the first term:
\begin{equation}
    \label{eq:eee}
    P_nD^\star(\hat{P}_n^\star) = 0.
\end{equation}

By doing so, using similar arguments as above for $\hat{P}_n^\star$ instead of $\hat{P}_n$, we have the following.
\begin{equation}
    \psi(\hat{P}_n^\star) - \psi(P_0) = P_nD^\star(P_0) + o_{P_0}(n^{-1/2}).
\end{equation}
Thus, our estimator $\psi(\hat{P}_n^\star)$ is a plug in estimator and attains the efficiency bound among the asymptotically linear and regular estimators.

\subsection{Efficient Influence Curve}

Then the efficient influence function of our target parameter \eqref{eq:g-functional} is computed as follows \cite{laanTargetedMinimumLoss2012}
\begin{equation}
    \label{eq:eif}
    \begin{aligned}
        D^\star(P)&=\sum_{t=1}^{\tau}D^\star_t(P)
        = (V_1 - \psi_0) +\sum_{t=1}^{\tau}  I_{t}(V_{t+1} - Q_t)
        =(V_1 - \psi_0)+\sum_{t=1}^{T}I_{t}(V_{t+1} - Q_t).
    \end{aligned}
\end{equation}
The last equality follows since $V_{t+1}=Q_t$ if $t>T$.

\subsection{Second-order Remainder and Double Robustness}

\label{sec:dr}

In this subsection, we derive the exact remainder (Theorem \ref{thm:R2}) and see its double robustness (Theorem \ref{thm:dr}).

\begin{theorem}
    \label{thm:R2}
    The second-order remainder of the target functional \eqref{eq:g-functional} is given by:
    \begin{equation}
        \label{eq:R2}
        R_2(\hat{P},P)=\sum_{t=1}^\tau\int_{\mathcal{O}_{1:t}}\frac{dg_{1:t}}{d\hat{\pi}_{1:t}}\Big[\int V_{t+1}(\hat{\eta})d\eta_{t+1}-Q_{t}(\hat{\eta})\Big]d(\pi_{1:t}-\hat{\pi}_{1:t})d\eta_{1:t}.
    \end{equation}
\end{theorem}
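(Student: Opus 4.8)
The plan is to derive the von Mises / first-order expansion $\psi(\hat P)-\psi(P) = -P D^\star(\hat P) + R_2(\hat P,P)$ explicitly, so that $R_2$ is obtained by subtracting the known efficient influence function term from the exact difference $\psi(\hat P)-\psi(P)$. First I would write both $\psi(\hat P)=\mathbb{E}_{\hat\pi\to g}[Y]$ under the estimated law and $\psi(P)=\mathbb{E}_{\pi\to g}[Y]$ under the truth using the sequential (iterated-expectation) representation through the nuisance sequence $Q_t,V_t$, i.e.\ peel off the integral over $\eta_{1:t}$ (the densities of the $L$- and $Y$-nodes together with baseline $W$), while the treatment nodes are replaced by $g$. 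The key algebraic identity is a telescoping decomposition: the difference between the $g$-computed outcome mean under $\hat P$ and under $P$ can be expanded node by node in the time ordering, and at each step the discrepancy is localized to the conditional law of $A_t$, namely $\pi_t$ versus $\hat\pi_t$, with the remaining future expectation taken under a mixture of true and estimated components. Re-weighting that localized discrepancy by the cumulative importance ratio $dg_{1:t}/d\hat\pi_{1:t}$ to convert a $g$-expectation into a $\pi$-expectation produces exactly the summand $\int (dg_{1:t}/d\hat\pi_{1:t})\big[\int V_{t+1}(\hat\eta)\,d\eta_{t+1} - Q_t(\hat\eta)\big]\,d(\pi_{1:t}-\hat\pi_{1:t})\,d\eta_{1:t}$.

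Concretely, the steps in order: (1) fix notation for the iterated-expectation / g-formula factorization of $\psi$ under an arbitrary law, isolating the roles of the covariate-outcome kernels $\eta_t$ and the treatment kernels; (2) write $\psi(\hat P)-\psi(P)$ as a telescoping sum over $t=1,\ldots,\tau$, where the $t$-th term replaces $(\eta_{1:t-1},\pi_{1:t-1})$ by their hatted versions in front and keeps $(\eta_{t+1:\tau},\pi_{t+1:\tau})$ (or the truth) behind, so that only the kernel at node $t$ differs; (3) recognize that after changing the treatment kernels to $g$ (which is common to both $\psi(\hat P)$ and $\psi(P)$ since the intervention is fixed), the genuine difference at step $t$ sits in the covariate/outcome transition, contributing the $d(\pi_{1:t}-\hat\pi_{1:t})$ factor once we change measure from $g$ to $\pi$ via $dg_{1:t}/d\hat\pi_{1:t}$; (4) collect the telescoped terms and subtract the efficient influence function contribution $-P D^\star(\hat P)=\sum_t P\big[I_t(\hat\pi)(Q_t(\hat\eta)-V_{t+1}(\hat\eta))\big] - (V_1(\hat\eta)-\psi(\hat P))$ term from Theorem stated earlier (equation \eqref{eq:eif}); what survives is precisely the stated $R_2$. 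I would lean on the facts that $V_{t+1}=Q_t$ whenever $t>T$ and that $V_1$ integrates $Q_1$ against $g_1$, to make the $t=1$ boundary and the $\psi(\hat P)$ constant cancel cleanly.

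The main obstacle I anticipate is bookkeeping the telescoping expansion correctly in the non-Markovian setting where every kernel conditions on the entire past: I must be careful that in the $t$-th telescoped term the "history" variables in $dg_{1:t}/d\hat\pi_{1:t}$, in $d(\pi_{1:t}-\hat\pi_{1:t})$, and in $V_{t+1}(\hat\eta),Q_t(\hat\eta)$ all refer consistently to the same integration variable $\eta_{1:t}$ generated under the hatted covariate law, and that the change of measure from the $g$-world back to the $\pi$-world is applied to the right block of coordinates. A secondary subtlety is handling the difference of signed measures $\pi_{1:t}-\hat\pi_{1:t}$ as a single object rather than expanding it — the cleanest route is to note that $[\int V_{t+1}(\hat\eta)\,d\eta_{t+1}-Q_t(\hat\eta)]$ does not depend on the $A_t$-node value under the $\hat{}$-evaluation (because $Q_t$ already conditions on $A_t$ and $V_{t+1}$ averages it out identically under either kernel when evaluated at $\hat\eta$), so the only place the two treatment kernels enter differently is through the outer weighting, which is exactly what creates the $d(\pi_{1:t}-\hat\pi_{1:t})$ term. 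Once that observation is in place the remaining manipulations are routine Fubini and cancellation.
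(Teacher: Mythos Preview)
Your overall strategy is the paper's: write $R_2(\hat P,P)=\psi(\hat P)-\psi(P)+P D^\star(\hat P)$, telescope, and collect terms. But two of your stated mechanisms are wrong, and following them literally would not produce \eqref{eq:R2}.

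First, the telescoping of $\psi(\hat P)-\psi(P)$ does \emph{not} localize any discrepancy to $\pi_t$ versus $\hat\pi_t$. Since $\psi$ replaces the treatment mechanism by $g$, one has $\psi(\hat P)-\psi(P)=g(\hat\eta-\eta)[Y]$: only the $\eta$-kernels differ, and $\pi,\hat\pi$ do not appear at all. Consequently, a change of measure $g_{1:t}=(dg_{1:t}/d\hat\pi_{1:t})\,\hat\pi_{1:t}$ in the telescoped $t$-th term yields an integral against $d\hat\pi_{1:t}$, not against $d(\pi_{1:t}-\hat\pi_{1:t})$ as you claim in step~(3). The factor $(\pi_{1:t}-\hat\pi_{1:t})$ appears only after you \emph{add} the EIF term: the $P$-expectation in $P D^\star(\hat P)$ supplies $\pi_{1:t}$ (because $P=\eta\otimes\pi$), while the telescoped piece supplies $\hat\pi_{1:t}$ via the Radon--Nikodym rewrite of $g_{1:t}$; the two combine as $\pi_{1:t}-\hat\pi_{1:t}$ because the $(\eta_{t+1}-\hat\eta_{t+1})$ factors come with opposite signs in the two pieces. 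Also, the telescoping orientation matters: the paper telescopes $g(\eta_1\hat\eta_{2:\tau+1}-\eta)$ with \emph{true} $\eta_{1:t}$ in the past and \emph{estimated} $\hat\eta_{t+2:\tau+1}$ in the future, because that is what lines up with the EIF summand $\frac{dg_{1:t}}{d\hat\pi_{1:t}}\pi_{1:t}\eta_{1:t}(\eta_{t+1}-\hat\eta_{t+1})g_{t+1:\tau}\hat\eta_{t+2:\tau+1}[Y]$. Your step~(2), with hatted kernels in front and true ones behind, would not match.

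Second, your ``secondary subtlety'' claim that $\big[\int V_{t+1}(\hat\eta)\,d\eta_{t+1}-Q_t(\hat\eta)\big]$ does not depend on the $A_t$-node value is false: both terms are functions of $pa(Y_t)=(L_{1:t},A_{1:t},Y_{1:t-1})$, hence depend on $A_{1:t}$. That bracket equals $g_{t+1:\tau}(\eta_{t+1}-\hat\eta_{t+1})\hat\eta_{t+2:\tau+1}[Y]$ evaluated at $pa(Y_t)$, and the signed measure $d(\pi_{1:t}-\hat\pi_{1:t})$ genuinely integrates over $A_{1:t}$; it is not a spectator. Once these two points are corrected, the rest of your plan (Fubini, boundary handling of $P\hat V_1-\psi(\hat P)$) goes through exactly as in the paper.
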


\begin{proof}
    First, we could consider the TDHD (Figure \ref{fig:architecture}) implicitly learns conditional distributions $\eta_{t}=P_{Y_{t-1},L_{t}|pa(Y_{t-1})}$ for $t=2,\ldots,\tau$ and outputs 
\begin{align}
    \hat{Q}_t&=Q_t(\hat{\eta})=(g_{t+1:\tau}\otimes\hat{\eta}_{t+1:\tau+1})[Y] = \int_{\mathcal{O}_{t+1:\tau}} Yd(g_{t+1:\tau}\otimes\hat{\eta}_{t+1:\tau+1})\\ 
    \hat{V}_t&=V_t(\hat{\eta})=(g_{t:\tau}\otimes\hat{\eta}_{t+1:\tau+1})[Y] =\int_{\mathcal{O}_{t+1:\tau}} Yd(g_{t:\tau}\otimes\hat{\eta}_{t+1:\tau+1})
\end{align}
for $t=1,\ldots,\tau$. Here we used a notation $\mu\nu f=(\mu\otimes\nu)[f]=\int fd(\mu\otimes\nu)$ for disintegrations of measures $\mu\otimes\nu$ and functions $f$, identifying the measure $\mu\otimes\nu$ with a linear functional from some functional class $\mathcal{F}$ to the real numbers. Let $\hat{\eta}_1$ be the empirical distribution of $W$ and $L_1$, $P=\eta\otimes\pi$ be the true and $\hat{P}=\hat{\eta} \otimes\hat{\pi}$ be the initial estimate of the joint density of the observables. By definition \eqref{eq:von-mises}, the second-order remainder is
\begin{equation}
    \label{eq:remainder}
\begin{aligned}
    R_2(\hat{P},P)&=\psi(\hat{P})-\psi(P)+PD^\star(\hat{\eta}\otimes\hat{\pi}) \\
    &= \Big[g(\hat\eta-\eta)+g(\eta_1\hat{\eta}_{2:\tau+1}-\hat{\eta}_{1:\tau+1})+\sum_{t=1}^\tau\frac{dg_{1:t}}{d\hat{\pi}_{1:t}}\pi_{1:t}g_{t+1:\tau}\eta_{1:t}(\eta_{t+1}-\hat{\eta}_{t+1})
    \hat{\eta}_{t+2:\tau+1}\Big][Y] \\
    &= \Big[g\sum_{t=1}^\tau\big(\eta_{1:t}\hat{\eta}_{t+1:\tau+1}-\eta_{1:t+1}\hat{\eta}_{t+2:\tau+1}\big)+\sum_{t=1}^\tau\frac{dg_{1:t}}{d\hat{\pi}_{1:t}}\pi_{1:t}g_{t+1:\tau}\eta_{1:t}(\eta_{t+1}-\hat{\eta}_{t+1})
    \hat{\eta}_{t+2:\tau+1}\Big][Y] \\
    & = \Big[\sum_{t=1}^\tau\frac{dg_{1:t}}{d\hat{\pi}_{1:t}}(\pi_{1:t}-\hat{\pi}_{1:t})g_{t+1:\tau}\eta_{1:t}(\eta_{t+1}-\hat{\eta}_{t+1})
    \hat{\eta}_{t+2:\tau+1}\Big][Y] \\
    &=\sum_{t=1}^\tau\int_{\mathcal{O}_{1:t}}\frac{dg_{1:t}}{d\hat{\pi}_{1:t}}\Big[\int V_{t+1}(\hat{\eta})d\eta_{t+1}-Q_{t}(\hat{\eta})\Big]d(\pi_{1:t}-\hat{\pi}_{1:t})d\eta_{1:t}.
\end{aligned}
\end{equation}
\end{proof}

\begin{theorem}
    \label{thm:dr}
    Suppose $\delta=\sup_{t,o} dg_{1:t}/d\hat{\pi}_{1:t}(o)$ exists and $\big\Vert \hat{Q}_{t}-Q_{t}
    \big\Vert_{\mathbb{L}^2(\nu_{1:t}\otimes\eta_{1:t})}
    \big\Vert \hat{e}_s-e_s\big\Vert_{\mathbb{L}^2(\nu_{1:s}\otimes\eta_{1:s})} = o_P(n^{-1/2})$ for all $t$ and $s\le t$, then $R_2(\hat{P},P)=o_P(n^{-1/2})$.
\end{theorem}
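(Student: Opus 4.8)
The plan is to start from the exact remainder of Theorem~\ref{thm:R2} and show that each of its $\tau$ summands is $o_P(n^{-1/2})$; since there are finitely many, so is their sum. Fix $t\in\{1,\ldots,\tau\}$ and write the $t$-th summand as
\[
R_{2,t}=\int_{\mathcal{O}_{1:t}}\frac{dg_{1:t}}{d\hat\pi_{1:t}}\,B_t(\hat\eta)\,d(\pi_{1:t}-\hat\pi_{1:t})\,d\eta_{1:t},\qquad B_t(\hat\eta)=\int V_{t+1}(\hat\eta)\,d\eta_{t+1}-Q_t(\hat\eta).
\]

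First I would rewrite the bracket $B_t(\hat\eta)$ as a combination of genuine outcome-regression errors. Since $Q_t(\hat\eta)=\hat Q_t$ is the integral of $\hat V_{t+1}$ against the \emph{estimated} conditional $\hat\eta_{t+1}$, while the true recursion gives $Q_t=\int V_{t+1}(\eta)\,d\eta_{t+1}$ and $\hat V_{t+1}-V_{t+1}=\int(\hat Q_{t+1}-Q_{t+1})\,dg_{t+1}$, a one-line manipulation yields
\[
B_t(\hat\eta)=-(\hat Q_t-Q_t)+\int(\hat Q_{t+1}-Q_{t+1})\,d\big(g_{t+1}\otimes\eta_{t+1}\big),
\]
with the convention $\hat Q_{\tau+1}=Q_{\tau+1}=Y$, so the last term vanishes when $t=\tau$. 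Thus $B_t(\hat\eta)$ is a fixed conditional average of the outcome-regression errors at times $t$ and $t+1$. Next I would expose the single-step propensity error by telescoping the cumulative density, $\pi_{1:t}-\hat\pi_{1:t}=\sum_{s=1}^{t}\hat\pi_{1:s-1}\,(e_s-\hat e_s)\,\pi_{s+1:t}$, so that $R_{2,t}=\sum_{s=1}^t R_{2,t,s}$ and in each $R_{2,t,s}$ the factor $e_s-\hat e_s$ enters linearly against $B_t(\hat\eta)$.

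Then, in each $R_{2,t,s}$, I would bound $dg_{1:t}/d\hat\pi_{1:t}\le\delta$ pointwise (finite by hypothesis), absorb the remaining Radon--Nikodym factors into a constant $C$ (finite because $\delta<\infty$ and, after rescaling, $Y$ and all $Q_t,V_t$ lie in $[0,1]$, which also justifies the Fubini interchanges), apply Cauchy--Schwarz in $\mathbb{L}^2$ of the relevant marginal of $P$, and change to the reference measures appearing in the hypothesis---a bounded change since the involved densities are bounded---to obtain
\[
|R_{2,t,s}|\le C\,\|\hat e_s-e_s\|_{\mathbb{L}^2(\nu_{1:s}\otimes\eta_{1:s})}\Big(\|\hat Q_t-Q_t\|_{\mathbb{L}^2(\nu_{1:t}\otimes\eta_{1:t})}+\|\hat Q_{t+1}-Q_{t+1}\|_{\mathbb{L}^2(\nu_{1:t+1}\otimes\eta_{1:t+1})}\Big).
\]
Since $s\le t<t+1$, both products on the right are exactly of the form assumed to be $o_P(n^{-1/2})$, and as $R_2$ is a sum of $O(\tau^2)$ such terms, $R_2(\hat P,P)=o_P(n^{-1/2})$.

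The hard part will be the bookkeeping in the Cauchy--Schwarz step: carefully identifying which marginal of $P$ each inner integral is taken against, and verifying that re-expressing everything in terms of the precise $\mathbb{L}^2$ norms that appear in the hypothesis costs only a bounded constant---this is exactly where finiteness of $\delta$ (positivity) and boundedness of the outcome are used. Rewriting the bracket and telescoping the cumulative densities are routine once the disintegration notation of Theorem~\ref{thm:R2} is in place.
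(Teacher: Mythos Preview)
Your proposal is correct and follows essentially the same route as the paper: subtract the true recursion $\int V_{t+1}(\eta)\,d\eta_{t+1}=Q_t(\eta)$ to turn the bracket into outcome-regression errors, telescope $\pi_{1:t}-\hat\pi_{1:t}$ into single-step propensity errors, bound the weight by $\delta$, and apply Cauchy--Schwarz. The only cosmetic differences are that the paper keeps the $V_{t+1}(\hat\eta)-V_{t+1}(\eta)$ piece as is rather than expanding it to $\int(\hat Q_{t+1}-Q_{t+1})\,dg_{t+1}$, and telescopes in the opposite direction ($e_{1:s-1}(e_s-\hat e_s)\hat e_{s+1:t}$ instead of your $\hat\pi_{1:s-1}(e_s-\hat e_s)\pi_{s+1:t}$); neither changes the argument.
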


\begin{proof}
Noting that 
\begin{equation}
    \int V_{t+1}(\eta)d\eta_{t+1} - Q_t(\eta) = 0,
\end{equation}
since the first term in the left side is the definition of $Q_t$, we have
\begin{equation}
    \label{eq:zero_residual}
    \sum_{t=1}^\tau\int_{\mathcal{O}_{1:t}}\frac{dg_{1:t}}{d\hat{\pi}_{1:t}}\Big[\int V_{t+1}(\eta)d\eta_{t+1}-Q_t(\eta)\Big]d(\pi_{1:t}-\hat{\pi}_{1:t})d\eta_{1:t}=0.
\end{equation}
Let $e_t=d\pi_t/d\nu_t$ and $\hat{e}_t=d\hat{\pi}_t/d\nu_t$ be the propensity scores with respect to the counting measures $\nu_t$ over the action space. By subtracting \eqref{eq:zero_residual} from the second-order remainder \eqref{eq:R2}, we have
\begin{align}
    R_2(\hat{P},P)=\sum_{t=1}^\tau\int \frac{dg_{1:t}}{d\hat{\pi}_{1:t}}\big(V_{t+1}(\hat{\eta})-V_{t+1}(\eta)\big)\Big[\sum_{s=1}^te_{1:s-1}(e_s-\hat{e}_s)\hat{e}_{s+1:t}\Big]d\nu_{1:t}d\eta_{1:t+1} \\
    -\sum_{t=1}^\tau\int \frac{dg_{1:t}}{d\hat{\pi}_{1:t}}\big(Q_t(\hat{\eta})-Q_t(\eta)\big)\Big[\sum_{s=1}^te_{1:s-1}(e_s-\hat{e}_s)\hat{e}_{s+1:t}\Big]d\nu_{1:t}d\eta_{1:t}.
\end{align}
Applying the Cauchy-Schwartz inequality, we have 
\begin{align}
    \big|R_2(\hat{P},P)\big| \le
    2\delta \sum_{t=1}^\tau \sum_{s=1}^t
    \big\Vert Q_{t}(\hat{\eta})-Q_{t}(\eta)
    \big\Vert_{\mathbb{L}^2(\nu_{1:t}\otimes\eta_{1:t})}
    \big\Vert e_s-\hat{e}_s\big\Vert_{\mathbb{L}^2(\nu_{1:s}\otimes\eta_{1:s})}.
\end{align}
\end{proof}



\section{Convergence of Temporal Difference Learning}

\label{sec:convergence of td}

First, consider a flexible model $Q_\theta$ and corresponding $V_{t,\theta}=\mathbb{E}_{A_t\sim g_t}Q_{t,\theta}$. Initiate $\theta^0$ and then iteratively update by $\theta^{k+1}=\argmin_{\theta}P\mathcal{L}(Q_\theta,V_{\theta^k})$ for $k=2,\ldots,$ till convergence. Our proof below shows that if we use a variation independent parameter space for each $Q_{t, \theta}$ and the parameter spaces contain the true $Q_{t,P}$, then in $K+2$-steps this algorithm will have converged to the true solution $Q_{P}$.

Ignoring the parameterization, but just thinking in terms of optimizing over parameter spaces, this algorithm corresponds with: initiate $V^0$, and then for $k=0, \ldots$, compute $Q^{k+1}=\arg \min _{Q} P \mathcal{L}\left(Q, V^k\right)$ and set $V^{k+1}$ as the one implied by the intervention $g$ and $Q^{k+1}$; and set $k=k+1$.

Firstly, we claim that in a nonparametric model the $t$-specific parameters $Q_t$ are variation independent across $t$. Consider a given $V$ (misspecified). This implies a parameter space $\left\{\mathbb{E}_{L_t\mid pa(L_t)\sim\mu_t(\cdot\mid pa(L_t))}V_t:\mu_t\right\}$ for the regressions $Q$. The parameter space of the free parameter $\mu_t$ is even larger than the parameter space of functions of $pa(L_t)$. Therefore this appears indeed a reasonable condition. Then we can state that the parameter space over which we optimize at step $k$ of the algorithm is the cartesian product of the parameter spaces $\mathcal{Q}_t$ for $Q_t$ across $t=\tau, \ldots, 1$. Consider the $k=1$-step of the algorithm in which the outcomes are $V^0$ and we optimize over all the $Q \in \prod_{t=\tau}^1 \mathcal{Q}_t$. Then, $Q^1$ is the minimizer of $Q \rightarrow P \mathcal{L}(Q, V^0)$. That means that the derivative w.r.t. $\varepsilon_t$ along a path $Q_t^1+\varepsilon_t h_t$ through $Q_t^1$ in any direction $h_t$ at $\epsilon_t=0$ should be equal to zero, across all $t=\tau, \ldots, 1$. Thus, at $\varepsilon=0$, we have
$$
\frac{d}{d\varepsilon} \sum_{t=1}^\tau \mathcal{L}_t(Q_t^1+\varepsilon_t h_t\mid V^0_{t+1}) = 0
$$
Consider the derivative w.r.t. $\varepsilon_Y$. This yields the score equation $P h_{Q_\tau}(V_{\tau+1}-Q^1_\tau)=0$ for all $h_{Q_\tau}$. This implies that $Q^1_\tau = Q_{\tau,P}$. The others are some optimizer. Now, we go to step $k=2$. We now know that $V^1_\tau=V_{\tau, P}$ due to $Q^1_\tau=Q_{\tau, P}$. Therefore, at the next step, due to the derivative w.r.t. $\varepsilon_{\tau-1}$, it follows that $Q^2_{\tau-1}=Q_{\tau-1, P}$, while it again $Q_\tau^2=Q_\tau^1=Q_{\tau, P}$. Then, at step $k=3$, we also obtain $Q_{\tau-2}^3=Q_{\tau-2, P}$. In this manner, it follows that after $K+2$ steps we have $Q^{K+2}=Q_{P}$.

\section{Evaluation of $V$}

To evaluate $V_t$ using a fit $\hat{Q}_t$, we integrate $A_t$ out under the counterfactual intervention $g_t$. Since we assume deterministic intervention $A^g_t = g(pa(A_t))$, we computed it as $\hat{V}_t = \hat{Q}_t(L_{1:t},a_{1:t})$ where $a_s$ is defined  by $a_s=g_s(L_{1:s},A_{1:s-1})$ for $s=1,\ldots,t$. This is based on the almost sure equality $\mathbb{E}_g[Y|L_{1:t},A_{1:t-1}] = \mathbb{E}_g[Y|L_{1:t},a_{1:t-1}]$ as random variables. To see this, note that for general random variables $X$ and $Y$ the conditional expectation $\mathbb{E}[Y|X=x]$ can take any value whenever $\mathbb{P}(X=x)=0$. An alternative approatch is computing $\hat{V}$ values as $\hat{V}_t = \hat{Q}_t(L_{1:t},A_{1:t-1},a_t)$ and both are almost surely equivalent. Here we supressed $Y$ values from the history assuming all those values were zero without loss of generality.

\section{Hyperparameter Tuning}
We selected hyperparameters shown in Table \ref{tab:hparams} which optimized the empiricall loss $\mathcal{L}^Q + \mathcal{L}^e$ in the validation set which is the 30\% of the entire dataset. The parameter $\alpha$ and $\beta$ for censoring mechanism balances the learning rate of $\pi$-parts and $Q$-parts because the complexity of $\pi$-parts would be simpler than $Q$-parts which involves prediction in the long-range.

\begin{table}[ht]
    \centering
    \caption{Selected hyperparameters.}
    \label{tab:hparams}
\resizebox{\textwidth}{!}{
    \begin{tabular}{lccccccccccccc}
    \toprule
    Data  & \multicolumn{6}{c}{Simple Synthetic Data} & \multicolumn{6}{c}{Complex Synthetic Data} & Real World \\
    \midrule
    Model & \multicolumn{3}{c}{Deep LTMLE} & \multicolumn{3}{c}{DeepACE} & \multicolumn{3}{c}{Deep LTMLE} & \multicolumn{3}{c}{DeepACE} & Deep LTMLE \\
    \midrule
    $\tau$ & 10 & 20 & 30 & 10 & 20 & 30 & 10 & 20 & 30 & 10 & 20 & 30 & 30 \\
    \midrule
    Embedding dimension & 16 & 16 & 32 & --- & --- & --- & 16 & 32 & 32 & --- & --- & --- & 32 \\
    Dropout rate & 0.1 & 0.2 & 0 & 0.1 & 0.1 & 0.2 & 0 & 0 & 0 & 0.3 & 0.3 & 0.3 & 0 \\
    Hidden size & 32 & 32 & 16 & 16 & 32 & 32 & 64 & 32 & 16 & 16 & 16 & 16 & 16 \\
    Number of Layers & 8 & 4 & 8 & 4 & 4 & 4 & 4 & 4 & 4 & 8 & 8 & 4 & 8 \\
    Number of heads & 4 & 4 & 4 & --- & --- & --- & 8 & 8 & 8 & --- & --- & --- & 4 \\
    Learning rate & 1e-3 & 5e-4 & 1e-4 & 5e-3 & 5e-4 & 1e-3 & 1e-3 & 5e-4 & 1e-4 & 5e-4 & 5e-4 & 5e-4 & 5e-4 \\
    $\alpha$ & 0.1 & 0.1 & 0.05 & 0.1 & 0.1 & 0.05 & 0.01 & 0.05 & 0.05 & 0.1 & 0.1 & 0.1 & 0.1 \\
    $\beta$ & --- & --- & --- & 0.05 & 0.05 & 0.05 & --- & --- & --- & 0.05 & 0.05 & 0.05 & 0.01 \\
    Number of epochs & 100 & 100 & 100 & 100 & 100 & 100 & 100 & 100 & 400 & 100 & 100 & 100 & 100 \\
    \bottomrule
    \end{tabular}
}
\end{table}

\section{Synthetic Data}

\subsection{Simple Synthetic Data with Continuous Outcome}
\label{sec:continuous synthetic data}

The process iteratively generates variables $W$, $L_t$, $A_t$, and $Y_t$ over time steps $t$, for $t = 0 ,\ldots, \tau-1$. $W\sim \text{Normal}(0,1)$. At $t=0$, $L_0 \sim \text{Normal}(0.1W, 1)$, $A_0 \sim \text{Ber}( \sigma(-0.5W + L_0))$, $Y_0 \sim \text{Ber}(\sigma(-3 + 0.2W + 0.2L_0 - 2 A_0))$. For $t > 0$, $L_t \sim \text{Normal}(0.1W - 0.1A_{t-1}, 1)$, $A_t \sim \text{Ber}(\sigma(-0.5 + 0.3 W + 0.3 L_t + 2 A_{t-1}))$, $Y_t = \sigma(-3 + 0.2 W + 0.2 L_t - 2 A_t)$, $\sigma(x)=(1+e^{-x})^{-1}$ is the sigmoid function. We set the counterfactual treatment at all time-points to 1 and and evaluated the counterfactual mean of survival under this treatment policy.

\subsection{Simple Synthetic Data with Survival Outcome}
\label{sec:simple synthetic data}

The process iteratively generates variables $W$, $L_t$, $A_t$, and $Y_t$ over time steps $t$, for $t = 0 ,\ldots, \tau-1$. $W\sim \text{Normal}(0,1)$. At $t=0$, $L_0 \sim \text{Normal}(0.1W, 1)$, $A_0 \sim \text{Ber}( \sigma(-0.5W + L_0))$, $Y_0 \sim \text{Ber}(\sigma(-3 + 0.2W + 0.2L_0 - 2 A_0))$. For $t > 0$, $L_t \sim \text{Normal}(0.1W - 0.1A_{t-1}, 1)$, $A_t \sim \text{Ber}(\sigma(-0.5 + 0.3 W + 0.3 L_t + 2 A_{t-1}))$, $Y_t \sim \text{Ber}(\sigma(-3 + 0.2 W + 0.2 L_t - 2 A_t))$, with $Y_{t-1} = 1$ implying $Y_t = 1$. Here $\sigma(x)=(1+e^{-x})^{-1}$ is the sigmoid function. We set the counterfactual treatment at all time-points to 1 and and evaluated the counterfactual mean of survival under this treatment policy.

\subsection{Complex Synthetic Data with Survival Outcome}
\label{sec:complex synthetic data}

First draw parameters $\alpha_i,\beta_i\sim\mathrm{Normal}((i+1)^{-1},0.02)$ and $\gamma_i\sim 2 * \mathrm{Binom}(0.5) - 1$ for $i\in[ht]$, where $h$ is the length of time-dependency with $h=1$ corresponding to Markovian process. Then, draw error in time-dependet variables $\varepsilon^\ell_{tj}\sim\mathrm{Normal}(0,0.1)$ for $t\in[\tau]$ and $j\in[p]$, errors in treatment $\varepsilon^A_{t1}\sim\mathrm{Normal}(0,0.2)$, $\varepsilon^A_{t2}\sim\mathrm{Normal}(0,0.05)$ for $t\in[\tau]$. For each $t\in[\tau]$, $L_t=\tanh\big(\sum_{k\in[ht]}\alpha_k L_{t-k}+\beta_k\gamma_k(2A_{t-k}-1)\big) + \varepsilon^\ell_{tj}$, then draw $A_t$ from an indicator function $\mathbbm{1}\{(\sigma(s_t+)+\varepsilon^A_{t2})>0.5\}$, with $s_t=\tan\big(\prod_{j\in[p]}\bar{L}_j
 + \bar{A}\big) + \varepsilon^A_{t1}$. The outcome $Y_t$ is drawn from a Bernoulli distribution of a probability $\sigma(p_t)$ with $p_t=\tan\big(\prod_{j\in[p]}\bar{L}_j
 - 0.7 * (\bar{A} - 0.5)) - 4.5$. $Y_t=1$ if $Y_{t-1}=1$ for $t>0$. We set the counterfactual treatment policy as $\mathbbm{1}\{\sigma(s_t)>0.5\}$ for $t\in[\tau]$ and evaluated the counterfactual mean of survival under this policy.

\section{Results with Simple Synthetic Data with Survival Outcome}

\label{sec: results with simple synthetic survival data}

Results of an experiment with the simple synthetic data described in Section \ref{sec:simple synthetic data} was shown in Table \ref{table1}. Although LTMLE's strong performance on simple synthetic data is anticipated due to reduced burden in estimating nuisance parameters from Markovian dependencies, Deep LTMLE remains highly competitive in this context, equalling LTMLE's performance. Our two targeting approaches demonstrated better bias variance trade off for the estimation of the target parameter compared to the untargeted approach. Both bias and standard deviation get improved a lot for all $\tau$'s considered. The targeting step made a marked difference in terms of coverage probability, getting much closer to a nominal 95\% coverage probability compared to the one without targeting.

\begin{table*}[ht]
\centering
\caption{Results from simple synthetic data}
\label{table1}
\resizebox{\textwidth}{!}{
    \begin{tabular}{lrrrrrrcccccc}
    \toprule
     & \multicolumn{3}{c}{Bias} & \multicolumn{3}{c}{RMSE} & \multicolumn{3}{c}{Coverage} & \multicolumn{3}{c}{Mean $\hat{\sigma}_n$} \\
    Model & $\tau = 10$ & $\tau = 20$ & $\tau = 30$ & $\tau = 10$ & $\tau = 20$ & $\tau = 30$ & $\tau = 10$ & $\tau = 20$ & $\tau = 30$ & $\tau = 10$ & $\tau = 20$ & $\tau = 30$ \\
    \midrule
    LTMLE (GLM) & 0.0052 & 0.0045 & 0.0021 & 0.0202 & 0.0268 & 0.0308 & 0.95 & 0.94 & 0.93 & 0.02 & 0.03 & 0.03 \\
    LTMLE (SL) & 0.0056 & 0.0058 & 0.0061 & 0.0203 & 0.0263 & 0.0311 & 0.91 & 0.93 & 0.91 & 0.02 & 0.02 & 0.03 \\
    DeepACE & 0.0260 & -0.0061 & -0.0976 & 0.1558 & 0.0380 & 0.1232 & 0.46 & 0.93 & 0.30 & 0.06 & 0.04 & 0.04 \\
    \midrule
    Deep LTMLE & 0.0045 & 0.0176 & 0.0257 & 0.0280 & 0.0328 & 0.0710 & 0.82 & 0.87 & 0.73 & 0.02 & 0.03 & 0.04 \\
    Deep LTMLE$\dagger$ & 0.0065 & 0.0068 & 0.0009 & 0.0225 & 0.0295 & 0.0347 & 0.91 & 0.91 & 0.92 & 0.02 & 0.03 & 0.04 \\
    Deep LTMLE$\star$ & 0.0059 & 0.0106 & 0.0036 & 0.0226 & 0.0296 & 0.0360 & 0.91 & 0.92 & 0.90 & 0.02 & 0.03 & 0.04 \\
        \bottomrule
    \end{tabular}
}
\end{table*}

\subsection{Semi-Synthetic Data}

\label{sec:semi-synth}

As a compromise, we conducted several additional experiments with semi-synthetic data from the real world data as used in previous studies \cite{Bica2020Estimating,Frauen_Hatt_Melnychuk_Feuerriegel_2023}. For this experiment, we used covariates from the Circulatory Risk in Communities Study (CIRCS) and fit outcome regression given the history through each time point using XGBoost with early stopping. Outcomes were then generated using this fitted regression model. For the experiment, we sample 1000 observations from the empirical dstribution of covariates $W,L_t,A_t$ and generate $Y_t$ for $t=1,\ldots,\tau$ with $\tau\in\{10,20.30\}$.

\section{Extension to Survival Analysis with Censoring}

\label{sec:extended-model}

In this section, we describe the extended LTMLE model with censoring for the real world application in Section \ref{sec:rwd}. We assume the following order of observed nodes $O=(W=W_0, L_1, A_1, C_1, Y_1 , L_2 , A_2 , C_2 , Y_2 , \ldots , L_\tau , A_\tau , C_\tau , Y_\tau=Y)$, where $C_t$ are binary censoring nodes with $C_t=1$ indicating one being censord. Our interest is to estimate the risk of our outcome $Y_\tau$, the mortality of the individual. However, our observation period spans long-term, individuals are at risk of being censored. Censoring $C_t$ is loss of follow-up from administrative reasons, for example, move to other areas or denial of participation in the survey. We assume degenerations of nodes. When we observe a jump in $Y$ or $C$ nodes, the process halts and all nodes after the jump remain constant with the last observed values. For example, if $Y_t=1$, then $Y_s=1$, $C_s=0$, $A_{s-1}=A_{t-1}$, and $L_s=L_t$ for all $s > t$.

We constructed a Deep LTMLE similar to the one describe in Section \ref{sec:Deep LTMLE} with this structure. The only difference is an additional component of censoring mechanism $\lambda^c$ which is involved in the clever covariate $I_t$ and the loss function:
\begin{align}
    I_t(\pi, \lambda^c) & = \prod_{s=1}^t \frac{d(g\otimes\mathbbm{1}(C_s=0))}{d(\pi_t\otimes \lambda^c_t)}(O), \text{ and }\\
    \mathcal{L}(Q, V, \pi, \lambda^c) &= \mathcal{L}^Q(Q, V) + \alpha\mathcal{L}^e(\pi, A) + \beta\mathcal{L}^c(\lambda^c, C),
\end{align}
where $\beta$ is an additional scaling hyperparameter. The counterfactual treatment on the censoring process is $\mathbbm{1}(C_t=0)$ meaning supression of censoring. Estimates of the target parameter and the efficient influence curve for different treatment strategies are computed using Deep LTMLE, and average treatment effects (ATEs) and its EIC were computed using the delta method. Based on the estimated EICs of the target parameters at each time point t, we constructed a simultaneous confidence intervals.

\end{document}